\documentclass[journal]{IEEEtran}
\usepackage{amsmath,amsfonts, amssymb, amsthm}
\usepackage{algorithmic}
\usepackage{algorithm}
\usepackage{array}
\usepackage{textcomp}
\usepackage{stfloats}
\usepackage{url}
\usepackage{verbatim}
\usepackage{graphicx}

\ifCLASSOPTIONcompsoc
\usepackage[caption=false,font=normalsize,labelfont=sf,textfont=sf,subrefformat=parens,labelformat=parens]{subfig}
\else
\usepackage[caption=false,font=footnotesize,subrefformat=parens,labelformat=parens]{subfig}
\fi

\usepackage{cite}
\usepackage{color}
\usepackage{bm}
\newtheorem{lemma}{Lemma}
\newcommand{\response}[1]{{\color{black}#1}}

\begin{document}

\title{Model-Driven Learning-Based Physical Layer Authentication for Mobile Wi-Fi Devices}

\author{Yijia~Guo,~\IEEEmembership{Member,~IEEE},
Junqing~Zhang,~\IEEEmembership{Senior Member,~IEEE},\\
Y.-W.~Peter~Hong,~\IEEEmembership{Senior Member,~IEEE},
and Stefano~Tomasin,~\IEEEmembership{Senior Member,~IEEE}
\thanks{Manuscript received xxx; revised xxx; accepted xxx. Date of publication xxx; date of current version xxx. The work of J. Zhang was supported in part by the UK Engineering and Physical Sciences Research Council (EPSRC) under grant ID EP/Y037197/1 and in part by Royal Society Research Grants under grant ID RGS/R1/231435. The work of Y.-W.~P.~Hong was supported in part by the National Science and Technology Council (NSTC) of Taiwan under grant NSTC 114-2221-E-007-145-MY3. The work of S.~Tomasin has been funded by the European Commission through the Horizon Europe/JU SNS project ROBUST-6G (Grant Agreement no. 101139068). The work of J.~Zhang and S. Tomasin was also supported by the EU COST Action CA22168 - Physical layer security for trustworthy and resilient 6G systems (6G-PHYSEC). For the purpose of open access, the authors have applied a Creative Commons Attribution (CC BY) licence to any Accepted Manuscript version arising.
The review of this paper was coordinated by xxx. 
    \textit{(Corresponding author: Junqing Zhang.)}}
    \thanks{Y.~Guo is with the School of Computer Science and Informatics, University of Liverpool, Liverpool, L69 3DR, United Kingdom. She is also with the Institute of Communications Engineering, National Tsing Hua University, Hsinchu, Taiwan 300044. (email: yijia.guo@liverpool.ac.uk)}
	\thanks{J.~Zhang is with the School of Computer Science and Informatics, University of Liverpool, Liverpool, L69 3DR, United Kingdom. (email: junqing.zhang@liverpool.ac.uk)}
    \thanks{Y.-W. P. Hong is with the Institute of Communications Engineering, National Tsing Hua University, Hsinchu, Taiwan 300044. (email:ywhong@ee.nthu.edu.tw)}    
    \thanks{S.~Tomasin is with the Department of Information Engineering, University of Padova, Padova, Italy. (email: stefano.tomasin@unipd.it)}
	\thanks{Color versions of one or more of the figures in this paper are available online at http://ieeexplore.ieee.org.}
	\thanks{Digital Object Identifier xxx}	
}

\markboth{Journal of \LaTeX\ Class Files,~Vol.~14, No.~8, August~2021}%
{Shell \MakeLowercase{\textit{et al.}}: A Sample Article Using IEEEtran.cls for IEEE Journals}


\maketitle

\begin{abstract}
The rise of wireless technologies has made the Internet of Things (IoT) ubiquitous, but the broadcast nature of wireless communications exposes IoT to authentication risks. Physical layer authentication (PLA) offers a promising solution by leveraging unique characteristics of wireless channels. As a common approach in PLA, hypothesis testing yields a theoretically optimal Neyman-Pearson (NP) detector, but its reliance on channel statistics limits its practicality in real-world scenarios. In contrast, deep learning-based PLA approaches are practical but tend to be not optimal.
To address these challenges, we proposed a learning-based PLA scheme driven by hypothesis testing and conducted extensive simulations and experimental evaluations using Wi-Fi. Specifically, we incorporated conditional statistical models into the hypothesis testing framework to derive a theoretically optimal NP detector. Building on this, we developed LiteNP-Net, a lightweight neural network driven by the NP detector. Simulation results demonstrated that LiteNP-Net could approach the performance of the NP detector even without prior knowledge of the channel statistics. To further assess its effectiveness in practical environments, we deployed an experimental testbed using Wi-Fi IoT development kits in various real-world scenarios. Experimental results demonstrated that the LiteNP-Net outperformed the conventional correlation-based method as well as state-of-the-art Siamese-based methods.
\end{abstract}

\begin{IEEEkeywords}
Internet of Things, physical layer authentication, channel state information, hypothesis test, neural networks.
\end{IEEEkeywords}

\section{Introduction}
\label{sec:introduction}
\IEEEPARstart{W}{ith} the rapid advancement of wireless technologies, the Internet of Things (IoT) enables widespread connectivity and seamless communication~\cite{aouedi2025survey}. However, the inherent broadcast nature of wireless communication makes it vulnerable to security threats, as any device within the signal range can intercept transmissions. This exposes networks to spoofing attacks, where malicious entities can masquerade as legitimate users to gain unauthorized access.

To mitigate such threats, device authentication plays a crucial role in ensuring network security. Specifically, physical layer authentication (PLA) based on channel characteristics has attracted significant attention, owing to the inherent difficulty of replicating wireless channels~\cite{wang2020physical,xie2020survey,hoang2024physical,zhang2025physical}. In particular, most earlier works on PLA exploited received signal strength (RSS) as coarse channel characteristics~\cite{yang2009detecting, yang2013detection, chen2010detecting}, which limited detection accuracy. However, with the development of orthogonal frequency division multiplexing (OFDM), channel state information (CSI) has become a more robust and fine-grained alternative for authentication~\cite{liu2014practical, shi2017smart, liao2019novel}.

\response{Initial research on PLA primarily relied on statistical models, which directly compared the channel differences between estimated raw CSI measurements~\cite{xiao2007fingerprints, xiao2008mimo-assisted, xiao2008using, Tugnait2010channel-based}.}
To further enhance authentication accuracy,~\cite{liu2011robust} designed a test statistic based on the differences between noise-suppressed CSI measurements. Building on this,~\cite{liu2013two} incorporated multipath delay information with the channel difference to improve reliability under low signal-to-noise ratio (SNR) conditions. In~\cite{liu2016physical}, both the differences of the channel amplitude and the time intervals between adjacent paths were quantized and combined as the test statistic to further enhance performance. To mitigate performance degradation caused by quantization errors,~\cite{xie2021physical} proposed an authentication approach that leveraged the differences in multiple CSI measurements to strengthen network security. Additionally,~\cite{zhang2021exploiting} integrated CSI differences with transmitter-specific phase noise to further improve authentication robustness. Some variants of difference-based hypothesis testing, such as correlation-based methods, can be theoretically derived under the assumption of equal energy symbols. However, existing works, e.g.,~\cite{liu2018authenticating}, may use correlation-based detectors heuristically without properly computing the impact of symbol power, which may lead to suboptimal performance.
Although difference-based hypothesis testing has demonstrated some effectiveness in prior studies, it still remains a heuristic approach in determining the test criterion. In contrast,~\cite{xiao2009channel} introduced a principled approach by developing a conditional statistical model tailored to frequency-selective Rayleigh channels. Specifically, channel variations were modelled as conditional distributions given the observed CSI measurements and the decision rule was derived from the Neyman-Pearson (NP) criterion. However, this model overlooks noise effects, limiting its practical effectiveness. Furthermore, although hypothesis testing can, in theory, yield the optimal NP detector, its application is constrained by the need for channel statistics, which is difficult to obtain in practice and may vary in different scenarios and over time.

With the progress of data-driven technologies, various learning-based methods have emerged for mobile authentication, which can be grouped into three primary approaches. 1) The first group involves using channel prediction for authentication. In~\cite{germain2021channel}, long short-term memory (LSTM) and gated recurrent units (GRU) were utilized to predict future channel states based on past measurements, followed by mean square error (MSE)-based detection. Similarly,~\cite{wang2021channel} proposed legitimate CSI prediction using historical data and geographical information of the transmitter for authentication. However, these studies lack validation with real-world data, limiting their practical use.
2) The second group relies on the use of classification algorithms for authentication. In~\cite{pan2019threshold}, several classification algorithms such as decision trees (DT), support vector machines (SVM), K-nearest neighbours (KNN) and ensemble methods were investigated to classify CSI measurements in mobile scenarios. 
Additionally,~\cite{jing2024multi-user} applied a ResNet model to further refine the classification process. However, these methods are based on the assumption that legitimate and rogue devices are located in separate areas, resulting in non-overlapping CSI measurements that enable clear classification. Such an assumption may not always hold in practice, especially when mobile devices have overlapping trajectories.
3) The third group involves similarity-based authentication techniques. 
In our previous work~\cite{guo2025practical}, we presented a Siamese network for device authentication in mobile scenarios, where the embedding network was designed based on a convolutional neural network (CNN). In~\cite{zhang2025enhancing}, a novel method was proposed that incorporated a sliding window with a Siamese network, in which the embedding network was built upon a fully connected network (FCN). These methods leverage the Siamese network’s ability to learn and compare the similarity between feature representations, making them robust in identifying subtle differences between CSI measurements. \response{Overall, among the above approaches, classification-based methods are unsuitable as benchmarks due to their strong assumptions on spatial separability. Both channel prediction–based and similarity-based approaches rely on pairwise comparison between the newly received CSI and a reference CSI, where the reference CSI is obtained from prediction or from previously received legitimate CSI, respectively. As such comparisons are mainly conducted using correlation-based metrics or Siamese network architectures in existing works, we adopt the correlation-based method in~\cite{xiao2009channel} and the Siamese network–based methods in~\cite{guo2025practical, zhang2025enhancing} as benchmarks in this paper.}

Despite the effectiveness of the above methods, the vast majority of the learning-based methods are data-driven and treat the authentication process as a black box. As a result, the neural networks are not specifically optimized, leading to higher computational overhead and potentially suboptimal authentication performance. \response{Beyond data-driven approaches, model-driven learning-based methods have been proposed to incorporate communication domain knowledge for enhanced authentication performance. However, existing model-driven PLA studies mainly focus on theoretical convergence analysis rather than practical network design. It is demonstrated in~\cite{brighente2019machine} that both multilayer perceptron (MLP) and least square SVM converge to the NP detection when provided with an infinite training dataset and sufficiently complex learning models. Building on this, the comparison between the statistical-based and the learning-based methods was explored in~\cite{senigagliesi2021comparison}. Further, one-class classifiers were investigated in~\cite{ardizzon2024learning}, proving that one-class least square SVM converges to the generalized likelihood ratio test (GLRT) with a carefully designed transformation function. Its further application in the fifth-generation (5G) networks was conducted in~\cite{varotto2024one}. Despite their theoretical insights, existing studies mainly employ standard neural networks and lack a systematic methodology for network design based on the mathematical authentication model.}
Nevertheless, model-driven learning-based approaches have been successfully applied in other areas of physical layer communications, such as transmission schemes~\cite{oshea2017introduction}, receiver design~\cite{gao2018comnet} and CSI estimation~\cite{he2018deep}, offering valuable insights for its application in PLA.

In this paper, we designed a learning-based PLA scheme driven by hypothesis testing and conducted extensive simulation and experimental evaluations. Specifically, an optimal NP detector was derived under a hypothesis testing framework that explicitly accounts for CSI measurement noise. Furthermore, by utilizing the mathematical model of the NP detector, the model-driven approach facilitates the design of a lightweight neural network (LiteNP-Net) that converges to theoretically optimal performance without relying on channel statistics. Both simulation and experimental results demonstrate the reliability and robustness of the proposed scheme. Our key contributions are summarized as follows.
\begin{itemize}
    \item An NP detector was derived under a hypothesis testing framework. Specifically, the hypothesis testing framework was established using conditional statistical models of channel variation, which explicitly incorporated the effects of CSI measurement noise. Within this framework, the likelihood ratio was used to construct the NP detector.
    \item \response{Driven by the mathematical model of the NP detector, we proposed a systematic methodology that guides the design of a lightweight neural network, LiteNP-Net. Following this methodology, the architecture of LiteNP-Net is directly inspired by the structure of the NP detector, ensuring convergence to the optimal NP detector without relying on channel statistics.}
    \item We conducted an extensive simulation evaluation of the proposed scheme using various channel conditions in MATLAB. The impact of the LiteNP-Net architecture, transmission intervals, SNR and attack distance on authentication performance was examined. The simulation results demonstrated that the LiteNP-Net could approach the performance of the theoretically optimal detector without relying on channel statistics.
    \item We carried out a comprehensive experimental evaluation using Wi-Fi in different indoor environments. A testbed was constructed with an ESP32 kit and two LoPy4 boards. 
The evaluation considered both line-of-sight (LOS) and non-line-of-sight (NLOS) indoor conditions. 
The experimental dataset used in this paper can be found in~\cite{dataset_csi}.
The reliability of the LiteNP-Net was investigated, and the experimental results demonstrated that the LiteNP-Net outperformed conventional correlation-based method and state-of-the-art Siamese-based methods.
\end{itemize}

The rest of this paper is organized as follows. Section~\ref{sec:system} introduces the system model and problem statement. Section~\ref{sec:channel} explains the channel model, including temporal and spatial correlation characteristics. Section~\ref{sec:hypothesis} specifies the hypothesis testing analysis of the authentication problem. Section~\ref{sec:proposed} elaborates the LiteNP-Net driven by hypothesis testing. Section~\ref{sec:benchmark} describes the benchmarks and performance metrics used in this paper. Section~\ref{sec:simulation} and Section~\ref{sec:experimental} present and discuss the simulation and the experimental results, respectively. \response{Section~\ref{sec:discussion} provides the discussion about threshold selection and potential limitations.} Finally, Section~\ref{sec:conclusion} concludes the paper.

\section{System Model and Problem Statement}
\label{sec:system}
\subsection{System Model}
As depicted in Fig.~\ref{fig:SystemModel}, the legitimate user Bob sends packets to Alice over a dynamic and time-varying wireless channel. However, their communication is threatened by a potential attacker, Mallory, who attempts to impersonate Bob and send packets to Alice, thereby compromising the system security.

In this system, all users operate under the IEEE 802.11n legacy OFDM framework, which utilizes a channel bandwidth of $20$ MHz and a total of $M=64$ subcarriers. Long training symbol (LTS) is used for channel estimation, occupying $M^{\prime}=52$ active subcarriers. Let $X^{[k]}[m]$ denote the LTS modulated to the $m$-th subcarrier in the $k$-th packet, which is a binary phase shift keying (BPSK)-modulated symbol drawn from a pseudo-random sequence of $\pm 1$. The corresponding time-domain signal can be given as
\begin{equation}
x^{[k]}[n]=\frac{1}{\sqrt{M}}\sum_{m=0}^{M-1} X^{[k]}[m] e^{j 2 \pi m n / M}.
\end{equation}
\begin{figure}[!t]
\centerline{\includegraphics[width=3.4in]{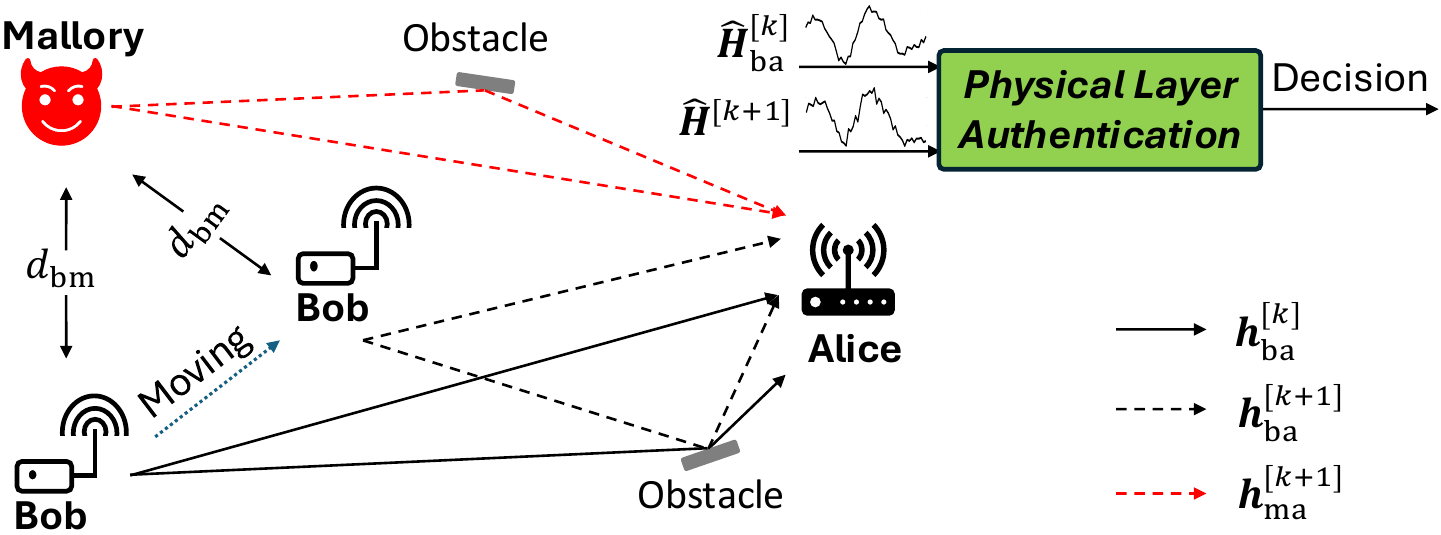}}
\caption{The system model for PLA in a mobile scenario. Multipath varies due to Bob's movement.}
\label{fig:SystemModel}
\end{figure}

Suppose the $k$-th packet received by Alice is sent by Bob at time $t_k$. The wireless channel between Bob and Alice at time $t_{k}$ can be expressed as $\bm{h}_{\rm ba}^{[k]}\triangleq[h_{\rm ba}^{[k]}[0],\dots,h_{\rm ba}^{[k]}[l],\dots,h_{\rm ba}^{[k]}[L-1]]$, where $L$ denotes the total number of channel taps and $h_{\rm ba}^{[k]}[l] \sim \mathcal{C N}(0, \sigma_{\rm ba}^2(l))$. Therefore, the received LTS in the $k$-th packet can be given as
\begin{equation}
y^{[k]}[n] =\sum_{l=0}^{L-1} x^{[k]}[n-l] h_{\rm ba}^{[k]}[l]+z[n],
\end{equation}
where $z[n] \sim \mathcal{C N}(0, \sigma_{0}^2)$ is the additive Gaussian white noise (AWGN). The equivalent frequency domain signal can be written as
\begin{equation}
\begin{aligned}
    Y^{[k]}[m] & =\frac{1}{\sqrt{M}}\sum_{n=0}^{M-1} y^{[k]}[n] e^{-j 2 \pi m n / M} \\
    & = X^{[k]}[m] H_{\rm ba}^{[k]}[m] +Z[m],
\end{aligned}
\end{equation}
where $H_{\rm ba}^{[k]}[m]$ is the channel frequency response (CFR) on the $m$-th subcarrier, given as
\begin{equation}
\label{eq:channeltimetofrequency}
    H_{\rm ba}^{[k]}[m]=\sum_{l=0}^{L-1} h_{\rm ba}^{[k]}[l] e^{-j 2 \pi m l/M},
\end{equation}
and
\begin{equation}
    Z[m]=\frac{1}{\sqrt{M}}\sum_{n=0}^{M-1} z[n] e^{-j 2 \pi m n / M}.
\end{equation}
The estimated channel coefficients across $M$ subcarriers can be expressed as $\widehat{\bm H}_{\rm ba}^{[k]}=[\widehat{H}_{\rm ba}^{[k]}[0], \dots, \widehat{H}_{\rm ba}^{[k]}[m], \dots, \widehat{H}_{\rm ba}^{[k]}[M-1]]$, which represents the CSI extracted from the received signal based on the least square (LS) channel estimation, with
\begin{equation}
\label{eq:lsEstimation_element}
\widehat{H}_{\rm ba}^{[k]}[m]=\frac{Y^{[k]}[m]}{X^{[k]}[m]}=H_{\rm ba}^{[k]}[m] +\widehat{Z}[m],
\end{equation}
where $\widehat{Z}[m] \sim \mathcal{CN}(0, \sigma^2)$ denotes the channel estimation noise, and we have $\sigma^2 = \sigma_{0}^2$ due to the unit power of $X^{[k]}[m]$.
Furthermore,~\eqref{eq:lsEstimation_element} can be rewritten in vector form as
\begin{equation}
\label{eq:lsEstimation_vector}
\bm{\widehat{H}}_{\rm ba}^{[k]} =\bm{H}_{\rm ba}^{[k]}+\bm{\widehat{Z}},
\end{equation}
where the true channel $\bm{H}_{\rm ba}^{[k]}\sim \mathcal{C N}(\bm{0}, \bm{\Sigma}_{\bm{H}})$ and channel estimation noise $\bm{\widehat{Z}} \sim \mathcal{C N}(\bm{0}, \sigma^2 \bm{I})$ are independent, with $\bm{I}$ denoting the identity matrix. Consequently, the estimated channel can be modelled as $\bm{\widehat{H}}_{\rm ba}^{[k]}\sim \mathcal{C N}(\bm{0}, \bm{\Sigma}_{\bm{\widehat{H}}})$, where 
\begin{equation}
\bm{\Sigma}_{\bm{\widehat{H}}}=\bm{\Sigma}_{\bm{H}}+\sigma^2 \bm{I}.
\end{equation}

\textbf{Threat Model:}
This paper focuses on a scenario where the attacker, Mallory, located at a distance $d_{\rm bm}$ from Bob, attempts to impersonate Bob, e.g., by spoofing his media access control (MAC) address, and sends packets to Alice.

The channel between Mallory and Alice at time $t_{k}$ is written as $\bm{h}_{\rm ma}^{[k]}\triangleq[h_{\rm ma}^{[k]}[0],\dots,h_{\rm ma}^{[k]}[l],\dots,h_{\rm ma}^{[k]}[L-1]]$, where $L$ refers to the number of channel taps. Without loss of generality, we assume the same tap length for $\bm{h}_{\rm ma}^{[k]}$ and $\bm{h}_{\rm ba}^{[k]}$.

Mallory, being aware of the wireless protocol utilized by Alice and Bob, has access to critical parameters such as the channel bandwidth and carrier frequency. Armed with this knowledge, Mallory is able to impersonate Bob and deceive Alice into accepting her as the legitimate transmitter by transmitting packets toward Alice.

\subsection{Problem Statement}
When Alice receives the $(k+1)$-th packet, originating either from Bob or Mallory, Alice estimates the CSI $\widehat{\bm H}^{[k+1]}$ from the received packet and performs device authentication to ensure the integrity of the communication. Conventionally, based on statistical analysis, this problem is formulated as a hypothesis testing problem, allowing the design of a NP detector to enable Alice to achieve theoretically optimal device authentication~\cite[Chapter 3]{kay1998detection}. However, implementing a NP detector requires sufficient knowledge of the channel statistics, which is often unrealistic in practice.

Beyond statistical methods, a substantial amount of research has explored data-driven approaches, in which the authentication task is treated as a black box, relying solely on neural networks without leveraging communication-specific domain knowledge. Despite the accuracy of the learning-based methods, they often incur considerable computational overhead. Furthermore, the performance gap between learning-based methods and the theoretical optimum remains largely unexplored, despite its significance.

This paper addressed the above challenges by using a model-driven neural network. It leveraged the advantages of both statistical and data-driven methods, enabling a lightweight network architecture while achieving performance close to the theoretical optimum.

\section{Channel Model}
\label{sec:channel}
In wireless communications, the time-varying nature of the channel can be represented by an autocorrelation function $R(\cdot)$. For the Bob-Alice link, the autocorrelation coefficient between $h_{\rm ba}^{[k+1]}[l]$ and $h_{\rm ba}^{[k]}[l]$ can be given as 
\begin{equation}
    \label{eq:alpha}
    \alpha \triangleq R(\Delta t_{k}),
\end{equation}
where $\Delta t_{k}=t_{k+1}-t_{k}$ denotes the transmission interval. For the Mallory-Alice link, the correlation between $h_{\rm ma}^{[k+1]}[l]$ and $h_{\rm ba}^{[k]}[l]$ can be expressed as the product of the spatial correlation $\rho(d_{\rm bm})$ between $h_{\rm ma}^{[k+1]}[l]$ and $h_{\rm ba}^{[k+1]}[l]$, and the temporal correlation $R(\Delta t_{k})$ between $h_{\rm ba}^{[k+1]}[l]$ and $h_{\rm ba}^{[k]}[l]$, i.e.,
\begin{equation}
    \label{eq:beta}
    \beta \triangleq \rho(d_{\rm bm}) \cdot R(\Delta t_{k}).
\end{equation}
Assuming Mallory moves at a velocity of $v_0$, $\rho(d_{\rm bm})$ can be equivalently regarded as the temporal correlation over the time period during which Mallory moves a distance of $d_{\rm bm}$~\cite[Chapter 3]{goldsmith2005wireless}, that is
\begin{equation}
    \label{eq:spatialcorrelation}
    \rho(d_{\rm bm})=R\Big(\frac{d_{\rm bm}}{v_0}\Big)=R\Big(\frac{d_{\rm bm}}{f_{\rm d}\lambda}\Big),
\end{equation}
where $f_{\rm d}$ represents the maximum Doppler spread and $\lambda$ is the wavelength. Therefore, by substituting~\eqref{eq:spatialcorrelation} into~\eqref{eq:beta}, the correlation between $h_{\rm ma}^{[k+1]}[l]$ and $h_{\rm ba}^{[k]}[l]$ is rewritten as
\begin{equation}
    \beta \triangleq R\Big(\frac{d_{\rm bm}}{f_{\rm d}\lambda}\Big) \cdot R(\Delta t_{k}).
\end{equation}

Assume that all the channel taps in $\bm{h}_{\text{tr}}^{[k]}$ adhere to the same Doppler spectrum, where $\text{tr} \in \{\text{ba}, \text{ma}\}$ denotes the link between the transmitter (Bob or Mallory) and the receiver (Alice), the channel at time $t_{k+1}$ can be expressed as~\cite{xiao2008using}
\begin{equation}\label{eq:channelcorrelation_time}
h_{\text{tr}}^{[k+1]}[l] =
\begin{cases}
\alpha h_{\rm ba}^{[k]}[l]+\sqrt{1-\alpha^2} \omega_{1}^{[k+1]}[l], & \text{if}\ \text{tr}=\text{ba}; \\
\frac{\beta}{\sqrt{\Theta}} h_{\rm ba}^{[k]}[l] + \frac{\sqrt{1-\beta^2}}{\sqrt{\Theta}} \omega_{2}^{[k+1]}[l], & \text{if}\ \text{tr}=\text{ma},
\end{cases}
\end{equation}
where the random noise component $\omega_{1}^{[k+1]}[l]$ and $\omega_{2}^{[k+1]}[l]$ are independent and identically distributed as $\mathcal{C N}(0, \sigma_{\rm ba}^2(l))$ and $\Theta = \sigma_{\rm ba}^2(l) / \sigma_{\rm ma}^2(l)$ is assumed constant across taps. The corresponding CFR can be expressed as 
\begin{equation}\label{eq:channelcorrelation_freq}
H_{\text{tr}}^{[k+1]}[m] =
\begin{cases}
\alpha H_{\rm ba}^{[k]}[m]+\sqrt{1-\alpha^2} \Omega_{1}^{[k+1]}[m], & \text{if}\ \text{tr}=\text{ba}; \\
\frac{\beta}{\sqrt{\Theta}} H_{\rm ba}^{[k]}[m]+\frac{\sqrt{1-\beta^2}}{\sqrt{\Theta}} \Omega_{2}^{[k+1]}[m], & \text{if}\ \text{tr}=\text{ma},
\end{cases}
\end{equation}
where $\Omega_{1}^{[k+1]}[m]$ and $\Omega_{2}^{[k+1]}[m]$ are the frequency domain representations of $\omega_{1}^{[k+1]}[l]$ and $\omega_{2}^{[k+1]}[l]$, respectively.

\section{Hypothesis Testing}
\label{sec:hypothesis}
In this section, we formulate device authentication as a hypothesis testing problem. Given that Alice receives the $k$-th packet from Bob, we aim to detect the presence of Mallory based on the CSI measured from the $(k+1)$-th packet. We first present the theoretically optimal NP detector based on conditional statistical models of channel variation and then declare the limitation of hypothesis testing.

\subsection{NP Detection}
\response{Given that Alice receives the $k$-th packet from Bob, a hypothesis test is used to decide if the $(k+1)$-th packet is transmitted by Bob or Mallory. The null hypothesis, $\mathcal{H}_0$, is that the $(k+1)$-th packet is transmitted by Bob. The alternative hypothesis, $\mathcal{H}_1$, is that the $(k+1)$-th packet is transmitted by Mallory. Thus,
\begin{subequations}
\begin{align}
    \mathcal{H}_0: \quad \bm{\widehat{H}}^{[k+1]} &= \bm{\widehat{H}}_{\rm ba}^{[k+1]} = \bm{H}_{\rm ba}^{[k+1]} + \bm{\widehat{Z}}, \\
    \mathcal{H}_1: \quad \bm{\widehat{H}}^{[k+1]} &= \bm{\widehat{H}}_{\rm ma}^{[k+1]} = \bm{H}_{\rm ma}^{[k+1]} + \bm{\widehat{Z}}.
\end{align}
\end{subequations}
Under null hypothesis $\mathcal{H}_0$, the conditional distribution of $\bm{\widehat{H}}_{\rm ba}^{[k+1]}$ given $\bm{\widehat{H}}_{\rm ba}^{[k]}$ is 
\begin{equation}
    \bm{\widehat{H}}_{\rm ba}^{[k+1]} \mid \bm{\widehat{H}}_{\rm ba}^{[k]} \sim \mathcal{CN}(\bm{\mu}_{\rm ba}^{[k+1]}, \bm{\Sigma}_{\rm ba}^{[k+1]}).
\end{equation}
Under alternative hypothesis $\mathcal{H}_1$, the conditional distribution of $\bm{\widehat{H}}_{\rm ma}^{[k+1]}$ given $\bm{\widehat{H}}_{\rm ba}^{[k]}$ is
\begin{equation}
    \bm{\widehat{H}}_{\rm ma}^{[k+1]} \mid \bm{\widehat{H}}_{\rm ba}^{[k]} \sim \mathcal{CN}(\bm{\mu}_{\rm ma}^{[k+1]}, \bm{\Sigma}_{\rm ma}^{[k+1]}).
\end{equation}
As demonstrated in Appendix~\ref{proof:pdf}, the conditional covariances $\bm{\Sigma}_{\rm ba}^{[k+1]}$ and $\bm{\Sigma}_{\rm ma}^{[k+1]}$ do not depend on the index $[k+1]$ and are thus written simply as $\bm{\Sigma}_{\rm ba}$ and $\bm{\Sigma}_{\rm ma}$, respectively. Furthermore, Appendix \ref{proof:pdf} also establishes the NP detector based on the likelihood ratio, which is given as
\begin{equation}
\label{eq:hypothesiscomplex}
\begin{aligned}
\Lambda \triangleq & (\bm{\widehat{H}}^{[k+1]})^{H} \bm{A} \bm{\widehat{H}}^{[k+1]}+\Re\{(\bm{\widehat{H}}_{\rm ba}^{[k]})^{H}\bm{B}\bm{\widehat{H}}^{[k+1]}\}\\
& +(\bm{\widehat{H}}_{\rm ba}^{[k]})^{H} \bm{C} \bm{\widehat{H}}_{\rm ba}^{[k]} \underset{\mathcal{H}_1}{\overset{\mathcal{H}_0}{\gtrless}}  \mathcal{T},
\end{aligned}
\end{equation}
where $\Re(\cdot)$ represents the real part and 
\begin{subequations}
\begin{align}
    & \bm{A}=-(\bm{\Sigma}_{\rm ba}^{-1}-\bm{\Sigma}_{\rm ma}^{-1}),\\
    & \bm{B}=2(\bm{\Sigma}_{\bm{H}} \bm{\Sigma}_{\bm{\widehat{H}}}^{-1})^{H}(\alpha \bm{\Sigma}_{\rm ba}^{-1}-\frac{\beta}{\sqrt{\Theta}}\bm{\Sigma}_{\rm ma}^{-1}),\\
    & \bm{C}=-\bm{\Sigma}_{\bm{\widehat{H}}}^{-1} \bm{\Sigma}_{\bm{H}} (\alpha^2 \bm{\Sigma}_{\rm ba}^{-1}-\frac{\beta^2}{\Theta} \bm{\Sigma}_{\rm ma}^{-1}) \bm{\Sigma}_{\bm{H}} \bm{\Sigma}_{\bm{\widehat{H}}}^{-1},
\end{align}
\end{subequations}
are related to the channel statistics. 
}

\subsection{Limitation of Hypothesis Testing}
As can be seen from~\eqref{eq:hypothesiscomplex}, the NP detector completely depends on channel statistics, i.e., $\alpha$, $\beta$, $\bm{\Sigma}_{\bm{H}}$, $\bm{\Sigma}_{\bm{\widehat{H}}}$, $\bm{\Sigma}_{\rm ba}$ and $\bm{\Sigma}_{\rm ma}$.
Assuming that the receiver obtains $N_{\rm ch}$ channel measurements and the channel estimation noise power $\sigma^2$, $\alpha$ and $\beta$ can be estimated as
\begin{subequations}
\begin{align}
\label{eq:alpha_eatimation}
& \hat{\alpha}=\Re\Big\{\frac{\sum_{i=1}^{N_{\rm ch}} \{\widehat{H}_{\rm ba}^{[k+1]}[m]\}_{i}\cdot \{\widehat{H}_{\rm ba}^{[k]*}[m]\}_{i}}{\sum_{i=1}^{N_{\rm ch}} \{\widehat{H}_{\rm ba}^{[k]}[m]\}_{i}\cdot \{\widehat{H}_{\rm ba}^{[k]*}[m]\}_{i}-\sigma^2} \Big\},\\
\label{eq:beta_eatimation}
& \hat{\beta}=\sqrt{\Theta}\cdot
\Re\Big\{ \frac{\sum_{i=1}^{N_{\rm ch}} \{\widehat{H}_{\rm ma}^{[k+1]}[m]\}_{i}\cdot \{\widehat{H}_{\rm ba}^{[k]*}[m]\}_{i}}{\sum_{i=1}^{N_{\rm ch}} \{\widehat{H}_{\rm ba}^{[k]}[m]\}_{i}\cdot \{\widehat{H}_{\rm ba}^{[k]*}[m]\}_{i}-\sigma^2} \Big\}.
\end{align}
\end{subequations}
The covariance matrices $\bm{\Sigma}_{\bm{\widehat{H}}}$ and $\bm{\Sigma}_{\bm{H}}$ can be estimated as
\begin{subequations}
\begin{align}
\label{eq:Sigma_hat_H_eatimation}
& \widehat{\bm \Sigma}_{\widehat{\bm H}}=\frac{1}{N_{\rm ch}}\sum_{i=1}^{N_{\rm ch}} \{\widehat{\bm H}_{\rm ba}^{[k]}\}_{i}\cdot \{\widehat{\bm H}_{\rm ba}^{[k]}\}_{i}^{H},\\
\label{eq:Sigma_H_eatimation}
& \widehat{\bm \Sigma}_{\bm{H}}=\frac{1}{N_{\rm ch}}\sum_{i=1}^{N_{\rm ch}} \{\widehat{\bm H}_{\rm ba}^{[k]}\}_{i}\cdot \{\widehat{\bm H}_{\rm ba}^{[k]}\}_{i}^{H}-\sigma^2 \bm{I}.
\end{align}
\end{subequations}
Similarly, the covariance matrices ${\bm \Sigma}_{\rm ba}$ and ${\bm \Sigma}_{\rm ma}$ can be estimated as
\begin{subequations}
\begin{align}
&
\label{eq:sigma_ba_estimation}
\begin{aligned}
\widehat{\bm \Sigma}_{\rm ba} =& \frac{1}{N_{\rm ch}}\sum_{i=1}^{N_{\rm ch}} [\{\widehat{\bm H}_{\rm ba}^{[k+1]}\}_{i}-\{{\bm \mu}_{\rm ba}^{[k+1]}\}_{i}]\cdot \\
& [\{\widehat{\bm H}_{\rm ba}^{[k+1]}\}_{i}^{H}-\{{\bm \mu}_{\rm ba}^{[k+1]}\}_{i}^{H}],
\end{aligned} \\
&
\label{eq:sigma_ma_estimation}
\begin{aligned}
\widehat{\bm \Sigma}_{\rm ma} =& \frac{1}{N_{\rm ch}}\sum_{i=1}^{N_{\rm ch}} [\{\widehat{\bm H}_{\rm ma}^{[k+1]}\}_{i}-\{{\bm \mu}_{\rm ma}^{[k+1]}\}_{i}]\cdot \\
& [\{\widehat{\bm H}_{\rm ma}^{[k+1]}\}_{i}^{H}-\{{\bm \mu}_{\rm ma}^{[k+1]}\}_{i}^{H}],
\end{aligned}
\end{align}
\end{subequations}
where
\begin{subequations}
\begin{align}
&
\label{eq:conditionalmean_ba_estimation}
\{\bm{\mu}_{\rm ba}^{[k+1]}\}_{i}=\hat{\alpha} \cdot \widehat{\bm{\Sigma}}_{\bm{H}} \widehat{\bm{\Sigma}}_{\bm{\widehat{H}}}^{-1} \{\widehat{\bm{H}}_{\rm{ba}}^{[k]}\}_{i},\\
&
\label{eq:conditionalmean_ma_estimation}
\{\bm{\mu}_{\rm ma}^{[k+1]}\}_{i}=\frac{\hat{\beta}}{\sqrt{\Theta}} \cdot \widehat{\bm{\Sigma}}_{\bm{H}} \widehat{\bm{\Sigma}}_{\bm{\widehat{H}}}^{-1} \{\widehat{\bm{H}}_{\rm{ba}}^{[k]}\}_{i}.
\end{align}
\end{subequations}

\response{In real-world environments, some practical factors limit the feasibility of the NP detector. First, the signal processing at the receiver introduces hardware-dependent noise, which prevents accurate estimation of the channel estimation noise power $\sigma^{2}$ and leads to unreliable estimates $\hat{\alpha}$, $\hat{\beta}$, $\widehat{\bm \Sigma}_{\bm{H}}$, $\widehat{\bm \Sigma}_{\rm ba}$ and $\widehat{\bm \Sigma}_{\rm ma}$. Second, LS estimation is inherently affected by AWGN, producing imperfect CSI that degrades the estimation of all the above channel statistics. Consequently, the derivation of the matrices $\bm{A}$, $\bm{B}$ and $\bm{C}$ becomes infeasible in practice, which means the performance of the theoretically optimal NP detector is often unachievable.
}

\section{LiteNP-Net, NP Detector-driven Neural Network-based PLA Design}
\label{sec:proposed}
In this section, a learning-based method driven by hypothesis testing is designed, which contains a training stage and a test stage. \response{We first present the motivation behind the proposed LiteNP-Net, then describe the LiteNP-Net design, and finally elaborate the proposed LiteNP-Net PLA system.}

\subsection{Motivation of LiteNP-Net}
\response{The motivation for designing LiteNP-Net stems from the practical inaccessibility of the coefficient matrices $\bm{A}$, $\bm{B}$, and $\bm{C}$ required by the NP detector. To overcome this limitation, the architecture of the LiteNP-Net is designed based on the mathematical model of the NP detector, in which the coefficient matrices $\bm{A}$, $\bm{B}$, and $\bm{C}$ are replaced by learnable parameters. By learning these parameters directly from CSI samples, LiteNP-Net is able to achieve performance close to the theoretical optimal NP detector without relying on channel statistics in practice.}

\subsection{LiteNP-Net Design}
\label{sec:litenp-net}
Inspired by the structure in~\eqref{eq:hypothesiscomplex}, LiteNP-Net adopts a model-driven approach to approximate the NP decision rule without relying on full channel statistics, while preserving a lightweight network structure.

\subsubsection{Design Principles}
LiteNP-Net is designed under the guidance of the mathematical model of the NP detector in~\eqref{eq:hypothesiscomplex}. However, the parameters in~\eqref{eq:hypothesiscomplex} are complex numbers and neural networks process real-valued data. Therefore, following the proof in appendix~\ref{proof:model},~\eqref{eq:hypothesiscomplex} is equivalent to
\begin{equation}
\label{eq:hypothesisrealimag}
\begin{aligned}
    & (\widehat{\bm{\mathbb{H}}}^{[k+1]})^{T} 
    \bm{A}^{\prime} 
    (\widehat{\bm{\mathbb{H}}}^{[k+1]}) + 
    (\widehat{\bm{\mathbb{H}}}_{\rm ba}^{[k]})^{T} 
    \bm{B}^{\prime}
    \widehat{\bm{\mathbb{H}}}^{[k+1]}\\
    &+(\widehat{\bm{\mathbb{H}}}_{\rm ba}^{[k]})^{T}
    \bm{C}^{\prime}
    \widehat{\bm{\mathbb{H}}}_{\rm ba}^{[k]} \underset{\mathcal{H}_1}{\overset{\mathcal{H}_0}{\gtrless}} \mathcal{T},
\end{aligned}
\end{equation}
where 
\begin{equation}
\widehat{\bm{\mathbb{H}}}_{\rm ba}^{[k]}=\begin{bmatrix} 
\Re\{ \bm{\widehat{H}}_{\rm ba}^{[k]} \}\\ 
\Im\{ \bm{\widehat{H}}_{\rm ba}^{[k]} \} 
\end{bmatrix}, \widehat{\bm{\mathbb{H}}}^{[k+1]}=\begin{bmatrix} 
\Re\{ \bm{\widehat{H}}^{[k+1]} \}\\ 
\Im\{ \bm{\widehat{H}}^{[k+1]} \} 
\end{bmatrix},
\end{equation}
with $\Im(\cdot)$ denoting the imaginary part, and 
\begin{subequations}
\label{eq:matrixABC_prime}
\begin{align}
    & \bm{A}^{\prime}=\begin{bmatrix}
    \Re\{\bm{A}\} & -\Im\{\bm{A}\} \\
    \Im\{\bm{A}\} & \Re\{\bm{A}\}
    \end{bmatrix}, \\
    & \bm{B}^{\prime}=\begin{bmatrix}
    \Re\{\bm{B}\} & -\Im\{\bm{B}\} \\
    \Im\{\bm{B}\} & \Re\{\bm{B}\}
    \end{bmatrix}, \\
    & \bm{C}^{\prime}=\begin{bmatrix}
    \Re\{\bm{C}\} & -\Im\{\bm{C}\} \\
    \Im\{\bm{C}\} & \Re\{\bm{C}\}
    \end{bmatrix}.
\end{align}
\end{subequations}

As illustrated in Fig.~\ref{fig:ProposedModel}, the LiteNP-Net consists of three embedding networks and a processing module, where $\circledast$ represents the inner product operation. The embedding networks $\Psi_{\rm A}$, $\Psi_{\rm B}$ and $\Psi_{\rm C}$ are related to the coefficient matrices $\bm{A}^{\prime}$, $\bm{B}^{\prime}$, and $\bm{C}^{\prime}$, and the processing module executes the corresponding inner product and addition operations as specified in~\eqref{eq:hypothesisrealimag}.
\begin{figure}[!t]
\centerline{\includegraphics[width=3.4in]{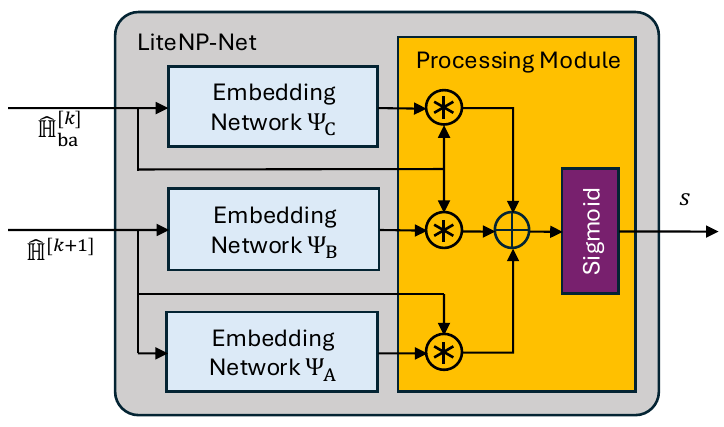}}
\caption{The structure of LiteNP-Net. }
\label{fig:ProposedModel}
\end{figure}

\subsubsection{Design Details}
Because MLPs with linear activation are equivalent to matrix multiplication, the embedding networks $\Psi_{\rm A}$, $\Psi_{\rm B}$, and $\Psi_{\rm C}$ are implemented as linear MLPs to directly represent the matrix operations of $\bm{A}^{\prime}$, $\bm{B}^{\prime}$ and $\bm{C}^{\prime}$ in~\eqref{eq:hypothesisrealimag}.
Given that the coefficient matrices $\bm{A}^{\prime}$, $\bm{B}^{\prime}$, $\bm{C}^{\prime}\in \mathbb{R}^{2M^{\prime} \times 2M^{\prime}}$, the input and output layers of embedding networks $\Psi_{\rm A}$, $\Psi_{\rm B}$ and $\Psi_{\rm C}$ are designed with $2M^{\prime}$ neurons, as shown in Fig.~\ref{fig:EmbeddingNetwork}. 
For the hidden layers, the layer with the fewest neurons is referred to as the bottleneck layer, and its number of neurons is called the latent dimension, denoted as $\mathcal{E}$, which corresponds to the rank of the coefficient matrices. An overly small latent dimension may lead to information loss during learning process, whereas an excessively large latent dimension can result in high computational overhead. Therefore, the hidden layers of embedding networks $\Psi_{\rm A}$, $\Psi_{\rm B}$ and $\Psi_{\rm C}$ are carefully designed to ensure its lightweight architecture without compromising authentication performance.
Taking embedding network $\Psi_{\rm B}$ as an example, due to the use of the linear activation function, the operation of embedding network $\Psi_{\rm B}$ can be expressed as
\begin{equation}
    \Psi_{\rm B}=\Psi_{\rm B_2} \cdot \Psi_{\rm B_1},
\end{equation}
where $\Psi_{\rm B_2}\in \mathbb{R}^{2M^{\prime} \times \mathcal{E}_{\rm B}}$ and $\Psi_{\rm B_1}\in \mathbb{R}^{\mathcal{E}_{\rm B} \times 2M^{\prime}}$, with $\mathcal{E}_{\rm B}$ representing the latent dimension of $\Psi_{\rm B}$. Based on the sub-multiplicative property of rank, it follows that 
\begin{equation}
    \text{rank}(\Psi_{\rm B}) \leq \min(\text{rank}(\Psi_{\rm B_1}), \text{rank}(\Psi_{\rm B_2}))=\mathcal{E}_{\rm B}.
\end{equation}
As demonstrated in Appendix~\ref{proof:rank}, since $\text{rank}(\bm{B}^{\prime})\leq 2L$, to ensure that $\Psi_{\rm B}$ can effectively replace $\bm{B}^{\prime}$, the latent dimension $\mathcal{E}_{\rm B}$ is designed to be $2L$. Similarly, as also proven in Appendix~\ref{proof:rank} that $\text{rank}(\bm{A}^{\prime})=2M^{\prime}$ and $\text{rank}(\bm{C}^{\prime})\leq 2L$, embedding network $\Psi_{\rm A}$ is constructed without hidden layers, indicating that it learns a matrix of rank $2M^{\prime}$, while embedding network $\Psi_{\rm C}$ follows the same structure as $\Psi_{\rm B}$, including a hidden layer with $2L$ neurons. Notably, embedding networks $\Psi_{\rm B}$ and $\Psi_{\rm C}$ do not share weights.
\begin{figure}[!t]
\centering
\subfloat[]{\includegraphics[width=1.28in]{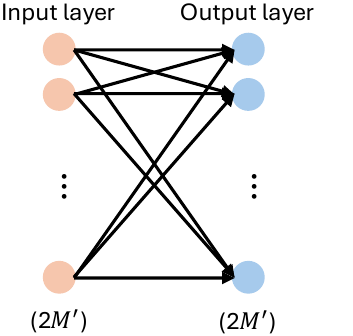}}
\subfloat[]{\includegraphics[width=1.7in]{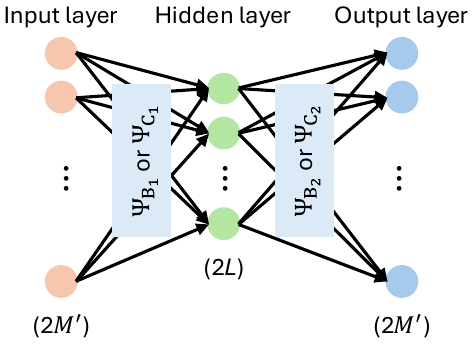}}
\caption{(a) The embedding network $\Psi_{\rm A}$. (b) The embedding networks $\Psi_{\rm B}$ and $\Psi_{\rm C}$. }
\label{fig:EmbeddingNetwork}
\end{figure}

In the processing module, the inner products between the outputs of embedding networks $\Psi_{\rm A}$, $\Psi_{\rm B}$ and $\Psi_{\rm C}$ and the corresponding input CSI measurements are computed and summed. The summed value is processed through a sigmoid activation function, generating a score $s$ between 0 and 1.

\subsection{The Proposed LiteNP-Net PLA System}
As shown in Fig.~\ref{fig:AuthenticationSystem}, the LiteNP-Net PLA  system consists of a training and a test stage. The LiteNP-Net automatically learns the channel statistics from the CSI measurements in the training stage. The trained LiteNP-Net can be utilized for device authentication in the test stage.
\begin{figure}[!t]
\centerline{\includegraphics[width=3.4in]{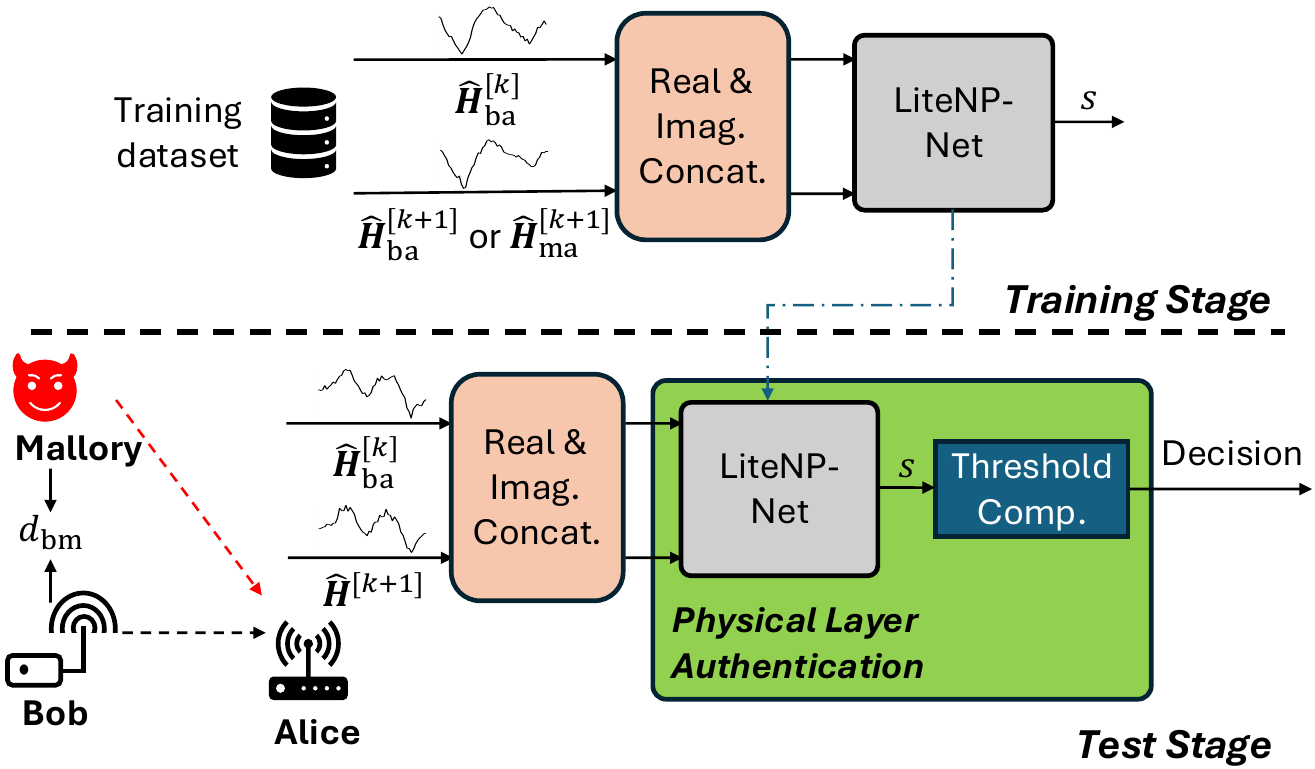}}
\caption{The proposed LiteNP-Net PLA system. }
\label{fig:AuthenticationSystem}
\end{figure}

\subsubsection{Dataset}
Assuming that the $k$-th received packet is transmitted by Bob, a set of dataset can be designed as $\mathcal{D}=\{({\bm u}_{i}, v_{i})\}_{i=1}^{N_{\text{data}}}$, where $N_{\text{data}} \in \{ N_{\text{train}}, N_{\text{test}}\}$ is the number of data samples in the training or test dataset and 
\begin{equation}
\label{eq:dataset}
({\bm u}_{i}, v_{i}) \triangleq
\begin{cases}
((\widehat{\bm H}_{\rm ba}^{[k]}, \widehat{\bm H}_{\rm ba}^{[k+1]}), 1), & \text{if}\ \widehat{\bm H}^{[k+1]}=\widehat{\bm H}_{\rm ba}^{[k+1]};\\
((\widehat{\bm H}_{\rm ba}^{[k]}, \widehat{\bm H}_{\rm ma}^{[k+1]}), 0), & \text{if}\ \widehat{\bm H}^{[k+1]}=\widehat{\bm H}_{\rm ma}^{[k+1]}.
\end{cases}
\end{equation}
We set $v_{i} = 1$ if the $(k+1)$-th received packet is also from Bob and $v_{i}=0$ if the $(k+1)$-th received packet is from Mallory. Specifically, $v_{i}$ acts as the label for the training dataset and the ground truth for the test dataset to evaluate the authentication performance of the LiteNP-Net.

\subsubsection{Training Stage}
As shown in Fig.~\ref{fig:AuthenticationSystem}, the inputs of the authentication system consist of two CSI measurements, which are processed through a real and imaginary part concatenation module before being fed into the LiteNP-Net.
It is trained to minimize the loss function defined as
\begin{equation}
\mathcal{L} =v_{i}\cdot[\max (0, \eta-s)]^2+(1-v_{i})\cdot s^2,
\end{equation}
where $\eta = 1$ is the margin parameter and $s$ denotes the output of LiteNP-Net. This training objective ensures that the output $s$ approaches 1 when the input CSI measurements originate from the same legitimate device, and tends toward 0 when they are from different devices.

\response{The architecture of the LiteNP-Net is implemented using Python 3.8 with TensorFlow and trained on NVIDIA Tesla V100 GPU. We employ the RMSprop optimizer with an initial learning rate of 0.001, a batch size of 32, and a maximum of 5000 training epochs. An early stopping criterion with a patience of 20 epochs is applied to prevent overfitting.}

\subsubsection{Test Stage} 
The trained LiteNP-Net is employed to identify rogue users. When Alice receives two CSI measurements, $\widehat{\bm H}_{\rm ba}^{[k]}$ and $\widehat{\bm H}^{[k+1]}$, she feeds them into the real and imaginary part concatenation module and then into the trained LiteNP-Net. The LiteNP-Net then generates a score $s$, which is subsequently compared against a predefined threshold to determine whether the device is legitimate or rogue, which can be given as
\begin{equation}
D=\left\{\begin{array}{rl}
    0, & \text{when } s \leq \epsilon_{\rm s}; \\
	1, & \text{when } s > \epsilon_{\rm s},
	\end{array} \right.
\end{equation}
where $D=0$ denotes that there is a rogue device, and $D=1$ denotes that there is no rogue device. The threshold $\epsilon_{\rm s}$ is obtained empirically through experiments.

\section{Benchmarks and Performance Metrics}
\label{sec:benchmark}
In this section, we first describe the benchmarks employed in this paper. Next, we present the performance metrics used in the forthcoming simulation and experimental evaluations.

\subsection{Benchmarks}
\subsubsection{Hypothesis Testing Framework}
Based on the conditional statistical model in~\cite{xiao2009channel}, it can be derived that
\begin{subequations}
\begin{align}
&
\label{eq:conditionalmean_ba_estimation_benchmark}
    \{\bm{\mu}_{\rm ba}^{[k+1]'}\}_{i}=\hat{\alpha} \cdot \{\widehat{\bm{H}}_{\rm ba}^{[k]}\}_{i},\\
&
\label{eq:conditionalmean_ma_estimation_benchmark}
    \{\bm{\mu}_{\rm ma}^{[k+1]'}\}_{i}=\frac{\hat{\beta}}{\sqrt{\Theta}} \cdot \{\widehat{\bm{H}}_{\rm ba}^{[k]}\}_{i}.
\end{align}
\end{subequations}
It can be observed that the conditional statistical model in~\cite{xiao2009channel} is a special case of~\eqref{eq:conditionalmean_ba_estimation} and~\eqref{eq:conditionalmean_ma_estimation} when channel estimation noise $\sigma^2$ is ignored, which impacts the NP detector derivation.

\subsubsection{Siamese Network-based Methods}
The works in~\cite{guo2025practical} and~\cite{zhang2025enhancing} both adopt Siamese-based approaches for device authentication, utilizing identical embedding networks to measure the similarity between channel measurements. In~\cite{guo2025practical}, the embedding network comprises two convolutional layers, a flatten layer and a fully connected layer.~\cite{zhang2025enhancing} instead employs a fully connected architecture with four layers. In both studies, device legitimacy is determined by a distance metric between the learned embeddings. As a data-driven approach, the Siamese network can capture certain channel statistics. However, since its architecture does not follow the mathematical model of the NP detector, its learning process may not align with the principles of optimal detection.

\subsubsection{Pearson Correlation-based Method}
The study in~\cite{liu2018authenticating} proposes a correlation-based approach for device authentication, where the Pearson correlation coefficient is utilized to measure the similarity between two consecutive CSI measurements, $\widehat{\bm H}_{\rm ba}^{[k]}$ and $\widehat{\bm H}^{[k+1]}$. A decision can be made by comparing the correlation with an empirically determined threshold. Despite its simplicity, the correlation-based method lacks a learning process and relies entirely on raw CSI measurements, leading to reduced robustness in practical applications.

\subsection{Performance Metrics}
The true positive rate (TPR), false positive rate (FPR), receiver operating characteristics (ROC) and area under the curve (AUC) are used as the performance metrics. Specifically, the TPR quantifies the proportion of actual positive cases that are correctly identified by the LiteNP-Net. On the other hand, the FPR measures the proportion of negative cases that are mistakenly classified as positive. The ROC curve is constructed using TPR and FPR, illustrating the trade-off between them across different threshold settings. Additionally, the AUC represents the area under ROC, providing a summary of the discriminative ability.

To evaluate the computational overhead of neural networks, parameter count and floating point operations (FLOPs) are used as performance metrics. The parameter count indicates the total number of trainable parameters within the model, which are optimized during training to minimize the loss function. This metric can be obtained using the built-in model.summary() function in TensorFlow. FLOPs measures the total number of floating point operations required for a single forward pass through the network and can be computed using the built-in tf.compat.v1.profiler() funtion in TensorFlow.

\section{Simulation Evaluation}
\label{sec:simulation}
In this section, we evaluate the performance of the LiteNP-Net in a controlled yet realistic channel environment. In the simulation, the channel parameters are precisely controlled, which allows us to evaluate the performance of the theoretically optimal NP detector and compare it with LiteNP-Net. In this section, the simulation setup is first introduced, followed by a detailed analysis of the simulation results.

\subsection{Simulation Setup}
To capture diverse WLAN conditions, a set of environment-specific channel models was used, as summarized in Table~\ref{tab:WLANchannelmodels}. Each model is defined by a power delay profile (PDP) composed of multiple clusters with exponential decay. Detailed PDP values can be found in~\cite{Erceg2004IEEEPW}.
\begin{table}[!t]
    \centering
    \caption{WLAN TGn Channel Models~\cite{Erceg2004IEEEPW}.}
    \begin{tabular}{cccc}
         \hline
         Model & RMS Delay (ns)& No. Clusters&Mapped Environment\\
         \hline
         Model B&15&2& Residential apartment\\
         Model C&30&2& Small office\\
         Model D&50&3& Typical office\\
         Model E&100&4& Large office\\
         Model F&150&6& Large space\\
         \hline
    \end{tabular}
    \label{tab:WLANchannelmodels}
\end{table}
According to~\cite{Erceg2004IEEEPW}, the Doppler spectrum follows a Bell-shaped distribution, given in the linear scale as
\begin{equation}
S(f)=\frac{\sqrt{A}}{\pi f_{\rm d}} \cdot \frac{1}{1+A(\frac{f}{f_{\rm d}})^2},
\end{equation}
where $A$ is a constant 9. Therefore, the autocorrelation function can be calculated as
\begin{equation}
R(\Delta t)=e^{-\frac{2 \pi f_{\rm d}}{\sqrt{A}}\Delta t},
\end{equation}
where $\Delta t$ represents the time interval.

The training and test simulation datasets can be denoted as $\mathcal{D}_{\text{train}}^{\rm S}=\{({\bm u}_{i}^{\rm S}, v_{i}^{\rm S})\}_{i=1}^{N_{\text{train}}^{\rm S}}$ and $\mathcal{D}_{\text{test}}^{\rm S}=\{({\bm u}_{j}^{\rm S}, v_{j}^{\rm S})\}_{j=1}^{N_{\text{test}}^{\rm S}}$.
Specifically, the channel ${\bm h}_{\rm ba}^{[k]}$ was generated using the IEEE 802.11 TGn channel models via the MATLAB WLAN toolbox\footnote{https://www.mathworks.com/help/wlan/ref/wlantgnchannel-system-object.html}. To model temporal and spatial evolution, the subsequent channel ${\bm h}_{\rm ba}^{[k+1]}$ and ${\bm h}_{\rm ma}^{[k+1]}$ were derived based on correlation models provided in~\eqref{eq:channelcorrelation_time}.
A WLAN Non-HT waveform was then created and transmitted through ${\bm h}_{\rm ba}^{[k]}$, ${\bm h}_{\rm ba}^{[k+1]}$ and ${\bm h}_{\rm ma}^{[k+1]}$, respectively, and the AWGN was added at the receiver. The receiver extracted the LTS from the received waveforms to estimate the CSIs, yielding $\widehat{\bm H}_{\rm ba}^{[k]}$, $\widehat{\bm H}_{\rm ba}^{[k+1]}$, and $\widehat{\bm H}_{\rm ma}^{[k+1]}$, which were utilized to construct the simulation datasets as defined in~\eqref{eq:dataset}. In the simulation, $N_{\text{train}}^{\rm S}=50,000$ and $N_{\text{test}}^{\rm S}=10,000$.

\subsection{Simulation Results}
Before conducting the performance analysis of the LiteNP-Net, firstly, we determined the suitable latent dimension $\mathcal{E}_{\rm B}$ and $\mathcal{E}_{\rm C}$ for the embedding networks $\Psi_{\rm B}$ and $\Psi_{\rm C}$. As demonstrated in Section~\ref{sec:proposed}, embedding network $\Psi_{\rm A}$ does not contain hidden layers, while the latent dimension of embedding networks $\Psi_{\rm B}$ and $\Psi_{\rm C}$ cannot be smaller than $2L$. Because the WLAN TGn channel model F has the longest RMS delay among the WLAN TGn channel models, it has higher requirements for the latent dimension compared to other models. Therefore, we examined the latent dimension in embedding networks $\Psi_{\rm B}$ and $\Psi_{\rm C}$ on the model F. \response{Fig.~\ref{fig:AUCvsTsimulation} shows the AUC versus $\Delta t_{k}$ using LiteNP-Net with different latent dimensions. The performance of the NP detector was obtained through~\eqref{eq:hypothesiscomplex}, where the channel statistics were estimated and computed by~\eqref{eq:alpha_eatimation}-\eqref{eq:conditionalmean_ma_estimation}.
It can be observed that the theoretically optimal NP detector achieves the best performance. However, it is important to note that this performance can only be achieved when the receiver has complete knowledge of the channel statistics, which is often unattainable in practical applications. Therefore, the performance of NP detector only serves as an optimal upper bound. Additionally, with the latent dimension below 32, the performance of the LiteNP-Net deteriorates due to the insufficient learning of channel statistics by the embedding networks $\Psi_{\rm B}$ and $\Psi_{\rm C}$. However, once the latent dimension reaches 32, 64, and 104, the performance of the LiteNP-Net closely approaches the theoretically optimal NP detector. This is because a large latent dimension enables the embedding networks $\Psi_{\rm B}$ and $\Psi_{\rm C}$ to effectively capture the channel statistics. Given the computational overhead, the embedding network with a latent dimension of 32 is chosen for further evaluation.}

\begin{figure}[!t]
\centerline{\includegraphics[width=3.4in]{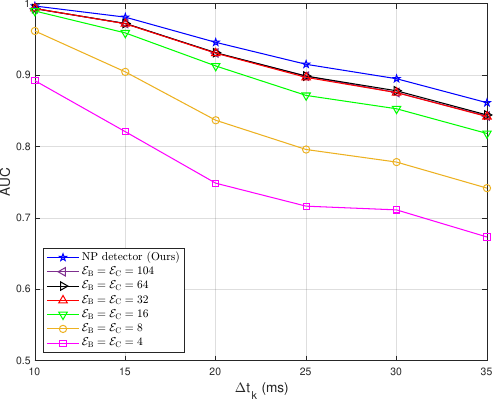}}
\caption{The AUC versus $\Delta t_{k}$ on the simulation test dataset with WLAN TGn channel model F, $\text{SNR}=6$~dB, $d_{\rm bm}/\lambda=0.25$ and $v_0=1$ m/s.}
\label{fig:AUCvsTsimulation}
\end{figure}

\response{Fig.~\ref{fig:AUCvsTablation} presents the ablation results of different neural network architectures and training losses. It can be observed that LiteNP-Net closely approaches the theoretically optimal NP detector. However, keeping two embedding networks, i.e., $\Psi_{\rm B}+\Psi_{\rm C}$, $\Psi_{A}+\Psi_{\rm C}$, and $\Psi_{\rm A}+\Psi_{\rm B}$, leads to the degraded AUC, indicating that all three embedding networks contribute essential channel statistics for robust PLA. Moreover, the choice of training loss, i.e., contrastive loss and binary cross-entropy, has comparatively minor impact on performance.}
\begin{figure}[!t]
\centerline{\includegraphics[width=3.4in]{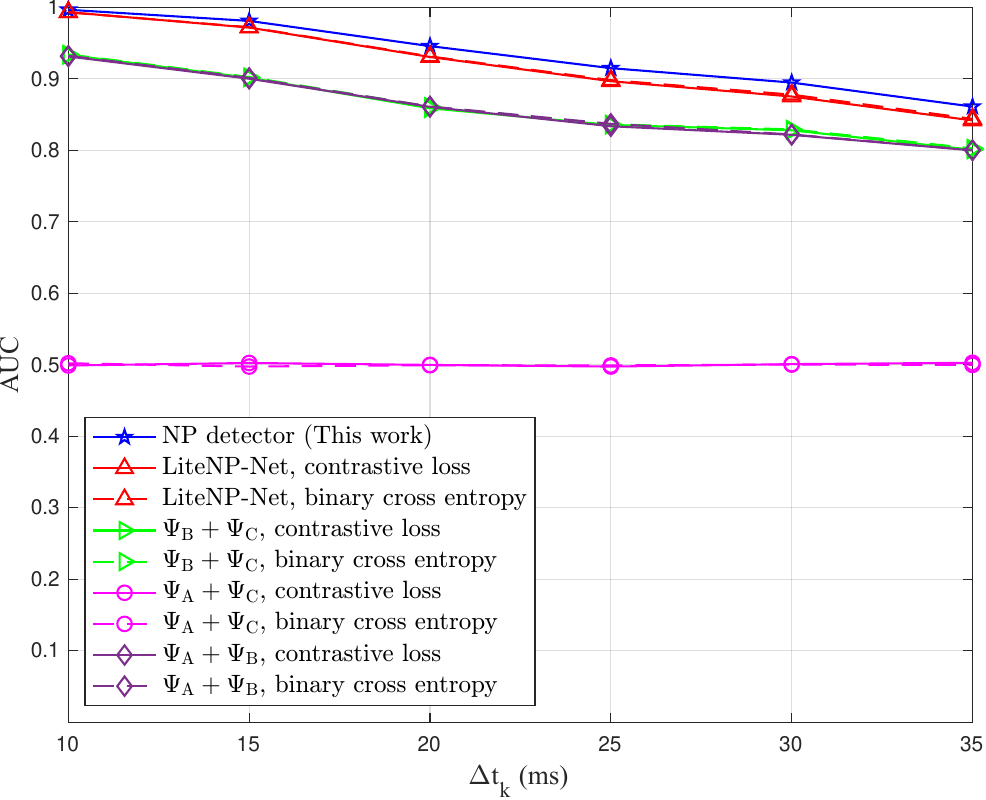}}
\caption{The ablation comparison on the simulation test dataset with WLAN TGn channel model F, $\text{SNR}=6$~dB, $d_{\rm bm}/\lambda=0.25$ and $v_0=1$ m/s.}
\label{fig:AUCvsTablation}
\end{figure}

In the subsequent performance analysis, we evaluated LiteNP-Net with the WLAN TGn channel model B. Due to its short RMS delay, model B exhibits weak frequency selectivity in the CSI, making it more difficult for LiteNP-Net to extract channel features from CSI measurements. Accordingly, model B serves as a relatively challenging case among the WLAN TGn channel models.
Fig.~\ref{fig:ROCsimulation} depicts the ROC curves of our NP detector in~\eqref{eq:hypothesiscomplex}, LiteNP-Net, FCN-based Siamese model in~\cite{zhang2025enhancing}, CNN-based Siamese model in~\cite{guo2025practical}, Pearson correlation-based method in~\cite{liu2018authenticating} and the NP detector derived from~\cite{xiao2009channel}.
\response{It can be seen that the LiteNP-Net can closely approach the performance of our NP detector. This is because the LiteNP-Net is specifically designed to follow the structure of the NP detector in~\eqref{eq:hypothesiscomplex} and it automatically learns the channel statistics from CSI measurements during the training stage. On the other hand, the FCN-based Siamese model, CNN-based Siamese model, and Pearson correlation-based method rely on a heuristic similarity metric between CSI measurements, without any guarantee that it matches the theoretically optimal decision rule. As a result, they may not fully exploit the statistical structure of wireless channels and thus yield suboptimal performance.} Additionally, since the conditional statistical model in~\cite{xiao2009channel} does not take channel estimation noise into account, the derived NP detector exhibits significantly degraded performance.
\begin{figure}[!t]
\centerline{\includegraphics[width=3.4in]{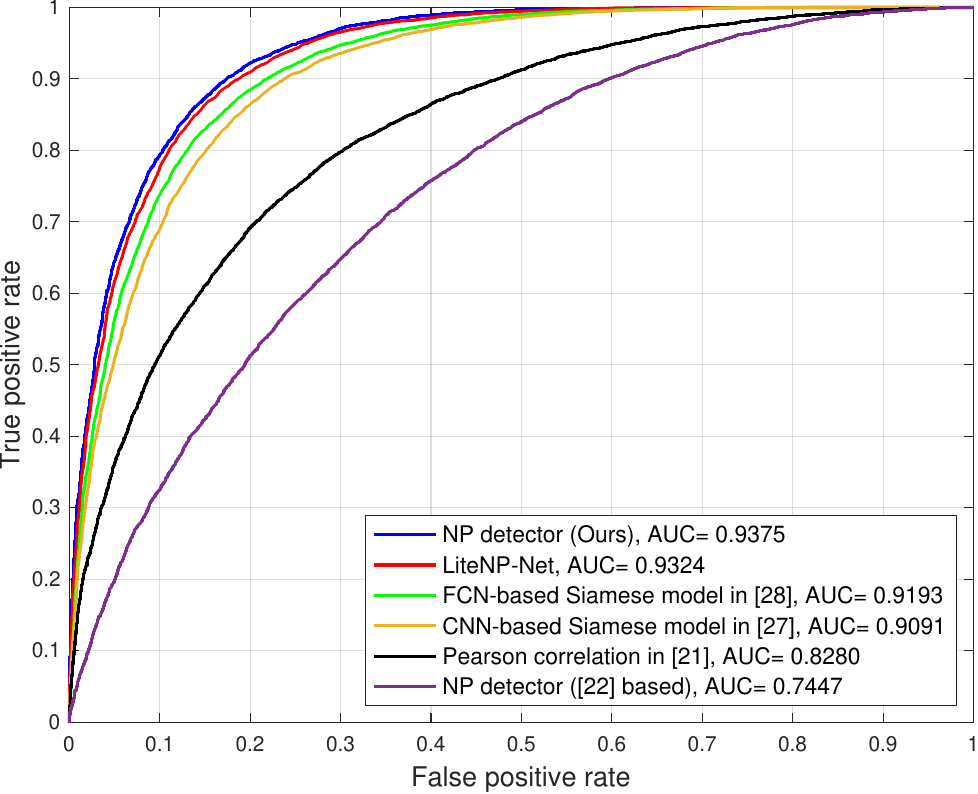}}
\caption{ROC curves on simulation test dataset with WLAN TGn channel model B, $\text{SNR}=12$~dB, $\Delta t_k=20$~ms, $d_{\rm bm}/\lambda=1$ and $v_0=1$~m/s.}
\label{fig:ROCsimulation}
\end{figure}

Fig.~\ref{fig:AUCvsSNRsimulation} illustrates the AUC of our approaches and the benchmarks versus SNR. The LiteNP-Net achieves performance close to the theoretical upper bound obtained by our NP detector in~\eqref{eq:hypothesiscomplex} and outperforms all the benchmark methods. As the estimation quality of CSI measurements improves at higher SNR levels, the performance of all methods improves with the increasing SNR.
\begin{figure}[!t]
\centerline{\includegraphics[width=3.4in]{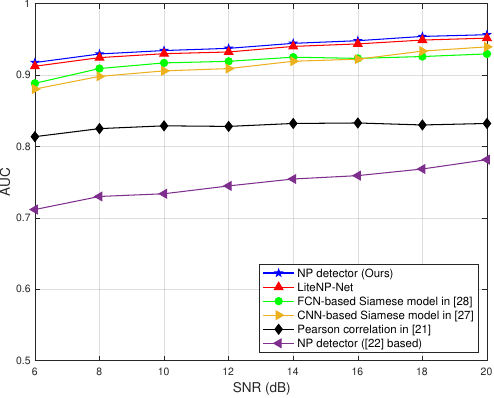}}
\caption{The AUC versus SNR on the simulation test dataset with WLAN TGn channel model B, $d_{\rm bm}/\lambda=1$, $\Delta t_{k}=20$ ms and $v_0=1$ m/s.}
\label{fig:AUCvsSNRsimulation}
\end{figure}

Fig.~\ref{fig:AUCvsDsimulation} shows the AUC versus the distance between Bob and Mallory normalized by wavelength $d_{\rm bm}/\lambda$. It can be illustrated that LiteNP-Net surpasses all benchmark methods in performance. Additionally, as the distance between Bob and Mallory increases, Alice performs better, achieving a higher AUC in detecting rogue devices.
\begin{figure}[!t]
\centerline{\includegraphics[width=3.4in]{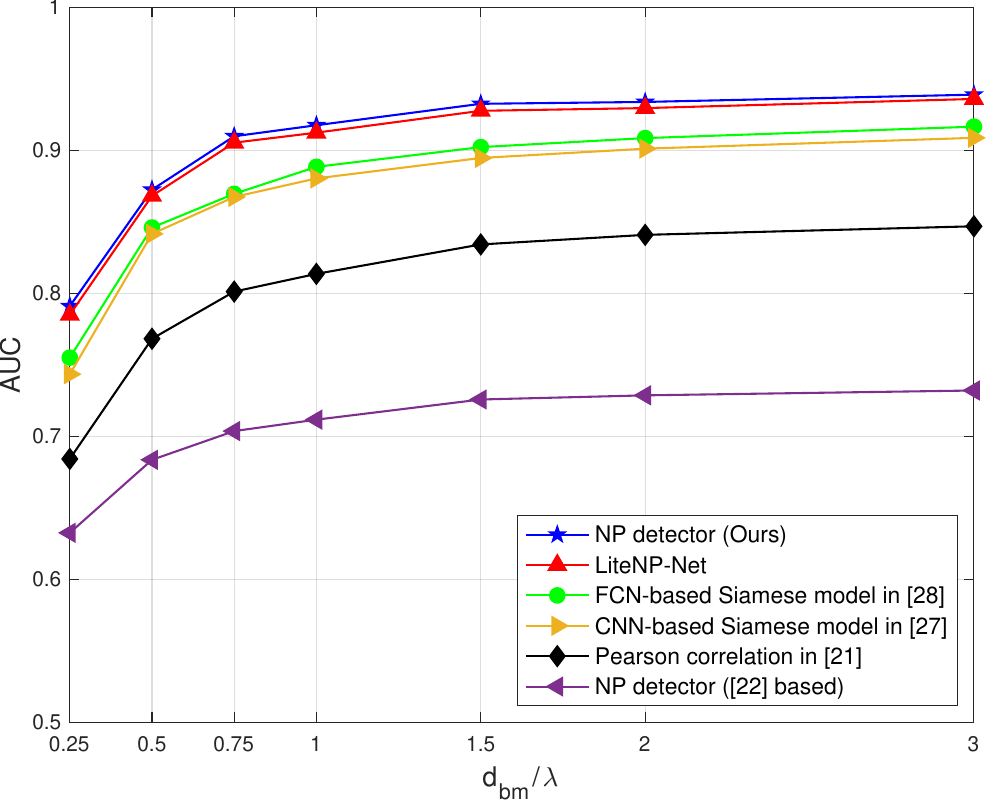}}
\caption{The AUC versus $d_{\rm bm}/\lambda$ on the simulation test dataset with WLAN TGn channel model B, $\text{SNR}=6$~dB, $\Delta t_{k}=20$ ms and $v_0=1$ m/s.}
\label{fig:AUCvsDsimulation}
\end{figure}

\response{Table~\ref{tab:computationalcomparison} presents a comparison of the computational overhead in terms of parameter count, FLOPs and inference time among LiteNP-Net, CNN-based Siamese model in~\cite{guo2025practical} and FCN-based Siamese model in~\cite{zhang2025enhancing}.} The computational overhead of the NP detector and the Pearson correlation-based method is not included, as they are non-learning-based approaches and do not involve trainable parameters or significant model inference costs. \response{Because Siamese networks do not integrate communication domain knowledge, they are not specifically optimized and consequently leads to larger parameter count, higher FLOPs and longer inference time. In contrast, LiteNP-Net is explicitly driven by the mathematical model of the NP detector, where each embedding network is implemented as a lightweight MLP network with least parameters to approximate the coefficient matrices in the NP detector. Thanks to the model-driven approach, LiteNP-Net achieves a significant reduction in computational overhead compared to both the CNN-based Siamese model and FCN-based Siamese model.
\begin{table}[!t]
    \centering
    \caption{Comparison of Computational Overhead.}
    \begin{tabular}{cccc}
         \hline
         & Parameter & FLOPs & Inference Time\\
         \hline
         LiteNP-Net&25,258&433,502&1.016 ms\\
         CNN-based Siamese in~\cite{guo2025practical}&57,108&2,141,480&1.390 ms\\
         FCN-based Siamese in~\cite{zhang2025enhancing}&296,209&1,558,516&1.076 ms\\
         \hline
    \end{tabular}
    \label{tab:computationalcomparison}
\end{table}
}

\section{Experimental Evaluation}
\label{sec:experimental}
In this section, we further explore the performance of the LiteNP-Net in experimental indoor wireless environments. The experimental setup is introduced first, followed by the evaluation of the experimental results.

\subsection{Experiment Setup}
\subsubsection{Device Configuration}
As illustrated in Fig.~\ref{fig:ExperimentDevices}, Alice was configured as a Wi-Fi access point (AP), while Bob and Mallory acted as stations. Alice was implemented using an ESP32 kit, which comes with built-in Wi-Fi connectivity. The ESP32 microcontroller gathered CSIs through the ESP32 CSI Toolkit\footnote{https://github.com/StevenMHernandez/ESP32-CSI-Tool}, a tool that provides detailed information, including the operating mode, MAC address of the transmitter, received signal strength indicator (RSSI), noise floor, time stamp of the received signal, and CSI. Once collected, the data was transmitted to a personal computer (PC) via a universal serial bus (USB) connection for further processing. In addition to Alice, two LoPy4 development boards\footnote{https://development.pycom.io/tutorials/networks/wlan/} were used to represent Bob and Mallory. These boards functioned in Wi-Fi station mode, transmitting signals to Alice. All the devices were configured to support IEEE 802.11n, operating with a $20$ MHz bandwidth and a $2.4$ GHz carrier frequency.
\begin{figure}[!t]
\centerline{\includegraphics[width=3in]{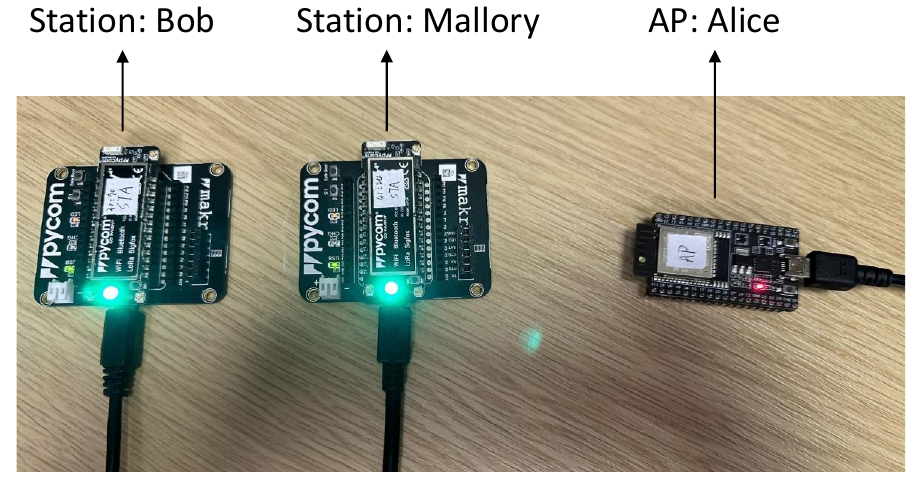}}
\caption{Experiment devices~\cite{guo2025practical}. (i) ESP32 in the Wi-Fi AP mode was used as Alice, and (ii) two LoPy4 boards in the Wi-Fi station mode were used as Bob and Mallory.}
\label{fig:ExperimentDevices}
\end{figure}

\subsubsection{Experiment Scenarios}
To evaluate the accuracy of the LiteNP-Net, four distinct scenarios were considered.
\begin{itemize}
    \item Scenario I: movement along a corridor A, as shown in Fig.~\ref{fig:TestScenarios}(a).
    \item Scenario II: movement within an office, as shown in Fig.~\ref{fig:TestScenarios}(b) (moving route 1).
    \item Scenario III: movement along a corridor B, as shown in Fig.~\ref{fig:TestScenarios}(b) (moving route 2).
    \item Scenario IV: movement in a residential apartment (floor plan not provided).
\end{itemize}
The experiments for scenario I and scenarios II \& III were conducted on the second and sixth floors of the Department of Electrical Engineering and Electronics at the University of Liverpool, UK. Notably, the datasets for training and test were independently gathered for each of the scenarios.
\begin{figure}[!t]
\centering
\subfloat[]{\includegraphics[width=3.2in]{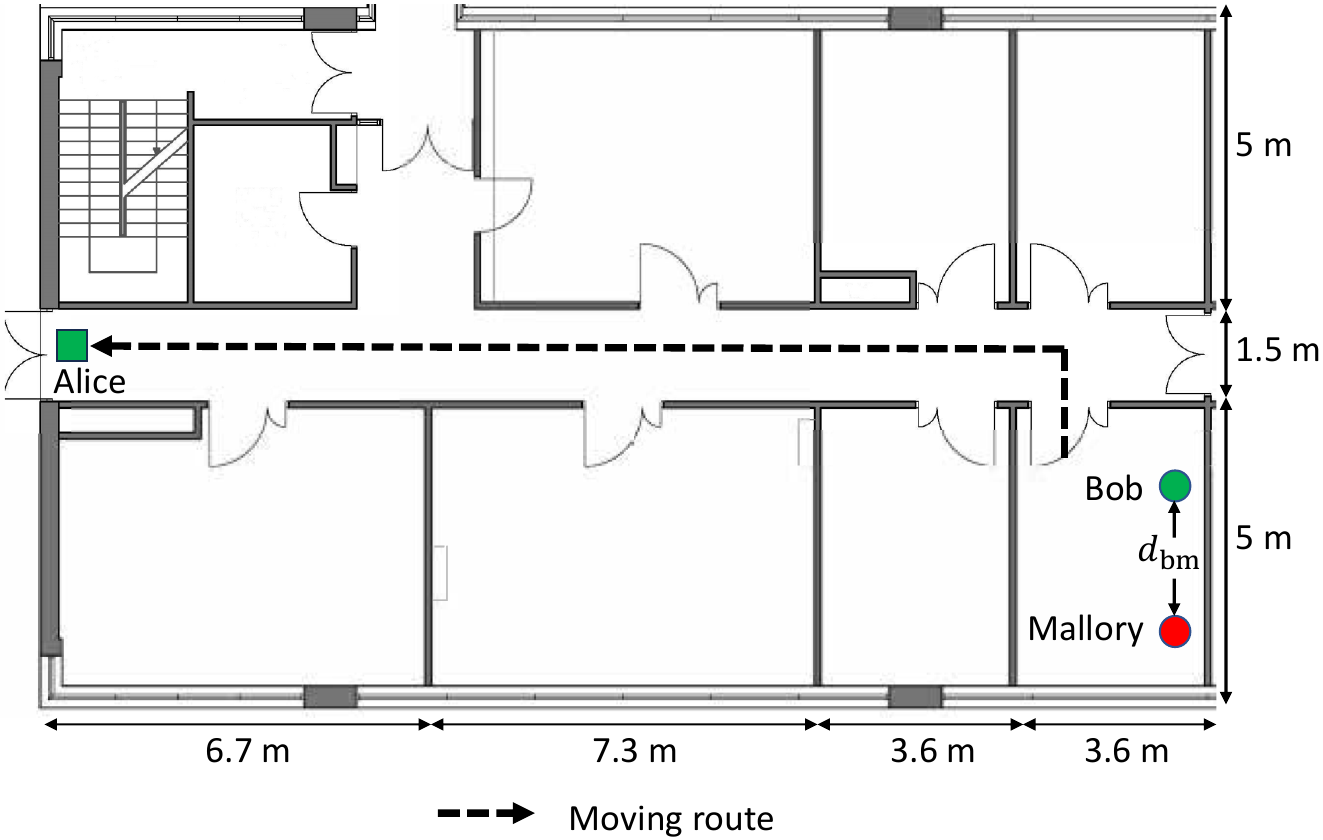}}

\subfloat[]{\includegraphics[width=3.4in]{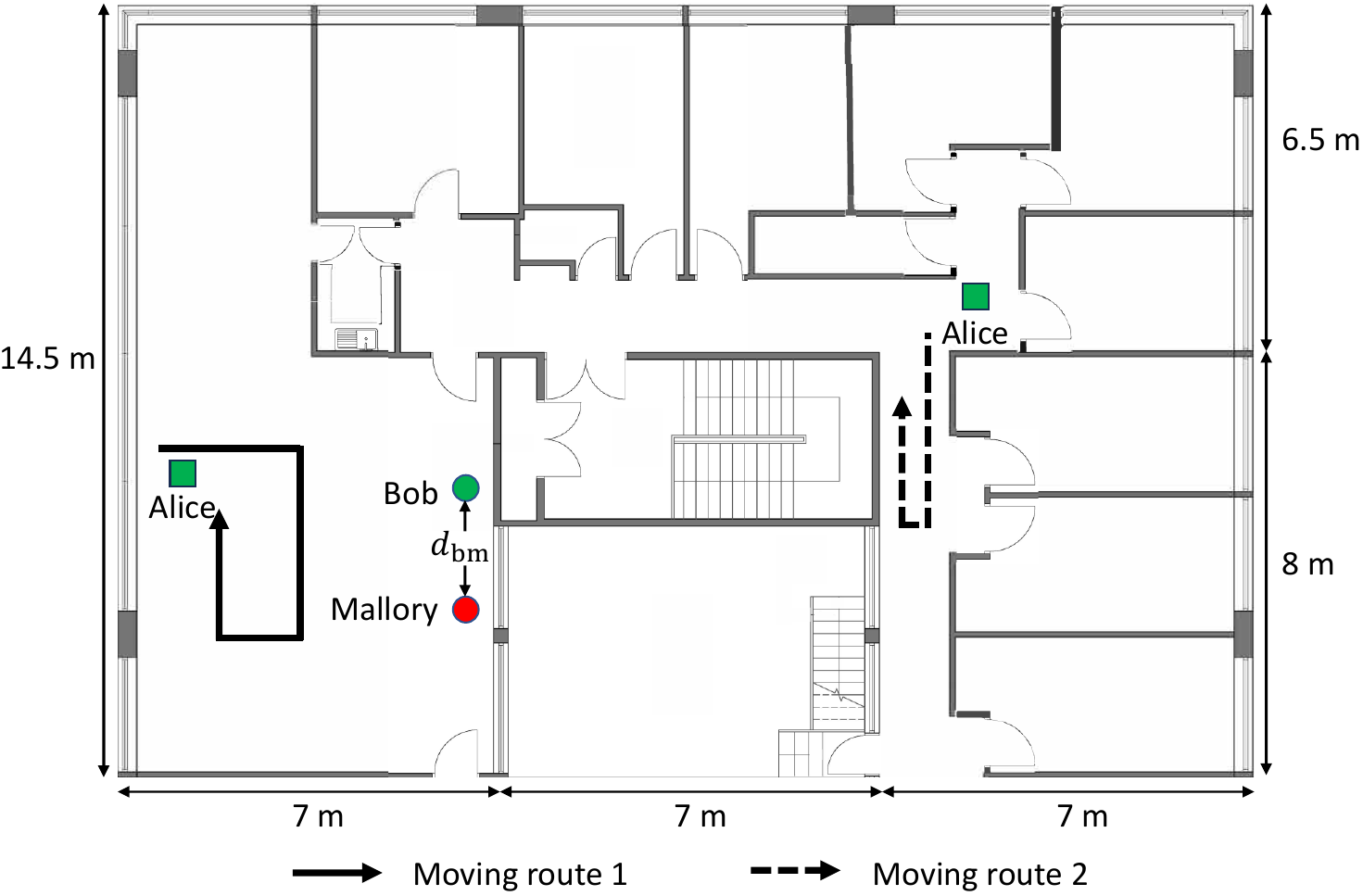}}

\caption{Experiment scenarios~\cite{guo2025practical}. (a) Scenario I: corridor A. (b) Scenario II: office (Moving route 1), and Scenario III: corridor B (Moving route 2). }
\label{fig:TestScenarios}
\end{figure}

\subsubsection{Experimental Training Dataset Collection}
Due to the non-cooperative nature of the attacker during the training stage, the training datasets were collected between Alice and Bob across all experimental scenarios. \response{In the experiments, Alice was carried by an experimenter and held at a height of approximately 1.2 m while moving along the predefined routes shown in Fig.~\ref{fig:TestScenarios} at a constant speed of 0.25 m/s, transmitting packets every 0.01 s. Bob was placed on a desk at the same height of approximately 1.2 m and remained stationary, with CSI measurements collected using the ESP32 CSI toolkit.}

As Alice moved, the CSI measurements collected at different time instants correspond to transmissions from different locations. Consequently, CSI measurements with significant time gaps typically represent transmissions from the device at widely separated locations, which can be interpreted as coming from different devices. Therefore, in our experimental training dataset $\mathcal{D}^{\rm E}_{\text{train}}=\{({\bm u}_{i}^{\rm E},v_{i}^{\rm E})\}_{i=1}^{N_{\text{train}}^{\rm E}}$, each input ${\bm u}_{i}^{\rm E}\triangleq (\widehat{\bm H}^{[k]},\widehat{\bm H}^{[k+\Delta k]})$ represents a pair of channel measurements separated by $\Delta k$ packets. When $\Delta k = 1$, we assign $v_{i}^{\rm E} = 1$, indicating that the CSI measurements are from the same device. When $\Delta k = 50$, we assign $v_{i}^{\rm E} = 0$, indicating that the measurements are from different devices. \response{In total, we collected $5145$, $5037$, $5077$, and $4240$ CSI measurements in scenario I, scenario II, scenario III, and scenario IV, respectively.}

\subsubsection{Experimental Test Dataset Collection}
The experimental test dataset was collected with a focus on accurately controlling and adjusting the distance between Bob and Mallory. To achieve this, we opted for the movement of the AP instead of the user stations. Due to channel reciprocity, this setup is equivalent to having a fixed AP with mobile user stations.
\response{Throughout all scenarios, Bob and Mallory were positioned on a desk at a height of approximately 1.2 m, with their separation distance $d_{\rm bm}$ precisely controlled along a marked straight line and set to 3, 6, 9, 12, 18, 24, and 36 cm. Given the operating frequency of 2.4 GHz, where the signal wavelength is about 12 cm, these distances correspond to 0.25, 0.5, 0.75, 1, 1.5, 2, and 3 wavelengths, respectively. Bob and Mallory transmitted signals to Alice at intervals of 0.01 s and 0.1 s, respectively. The movement pattern of Alice remained the same as in the training stage. For each attack-distance configuration, we collected 2000, 2800, 2700, and 2500 CSI measurements for scenario I, scenario II, scenario III, and scenario IV, respectively.} 
Furthermore, by extracting the MAC addresses of the transmitters, Alice was able to accurately assign the correct identity to each received packet.

\subsubsection{Preprocessing of Experimental Dataset}
In the experiments, phase noise, caused by imperfections in hardware oscillators, affects CSI measurements. To enhance authentication reliability, we applied the phase noise compensation method from~\cite{xiao2009channel} to preprocess the data before training LiteNP-Net.

The experimental dataset before preprocessing can be found in~\cite{dataset_csi}.

\subsection{Experiment Results}
Fig.~\ref{fig:ROCexperiment} shows the ROC curves of the LiteNP-Net, CNN-based Siamese model in~\cite{guo2025practical}, FCN-based Siamese model in~\cite{zhang2025enhancing} and Pearson correlation-based method in~\cite{liu2018authenticating} on the test dataset collected in the scenario I (corridor A). 
\begin{figure}[!t]
\centerline{\includegraphics[width=3.4in]{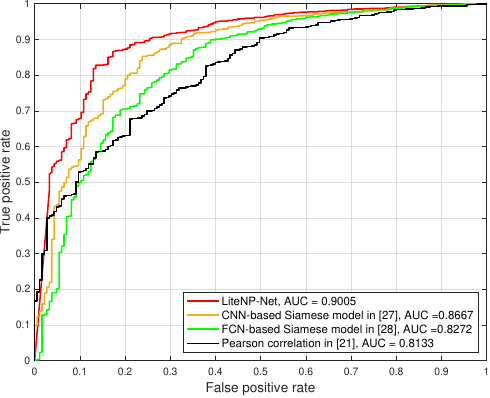}}
\caption{ROC curves on the test dataset collected in the scenario I (corridor A) with $d_{\rm bm}/\lambda=0.25$.}
\label{fig:ROCexperiment}
\end{figure}
In the experimental scenario, it is unfeasible to obtain the channel statistics required for NP detection, making it impossible to achieve the theoretically optimal performance. However, when applying the LiteNP-Net, it can automatically learn the coefficient matrix $\bm{A}$, $\bm{B}$, and $\bm{C}$ from collected CSI measurements during the training stage. In the test stage, the LiteNP-Net can potentially function as an NP detector. In contrast, the similarity-based methods serve as suboptimal solutions. Therefore, the LiteNP-Net outperforms CNN-based Siamese model, FCN-based Siamese model and Pearson correlation-based method, achieving a higher AUC.

Fig.~\ref{fig:AUC_scenario} illustrates the AUC versus the distance between Bob and Mallory normalized by wavelength $d_{\rm bm}/\lambda$ in various experimental scenarios. The LiteNP-Net, CNN-based Siamese model and FCN-based Siamese model are trained and tested in separate experimental scenarios. It can be observed that the LiteNP-Net outperforms the correlation-based method in all scenarios and surpasses the CNN-based Siamese model and FCN-based Siamese model in most scenarios.
\begin{figure}[!t]
\centering
\subfloat[Scenario I.]{\includegraphics[width=3.4in]{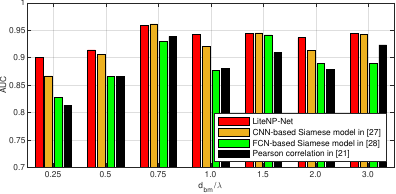}}

\subfloat[Scenario II.]{\includegraphics[width=3.4in]{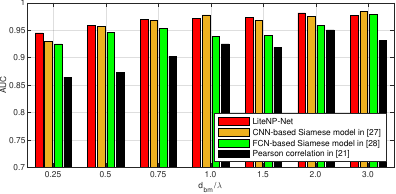}}

\subfloat[Scenario III.]{\includegraphics[width=3.4in]{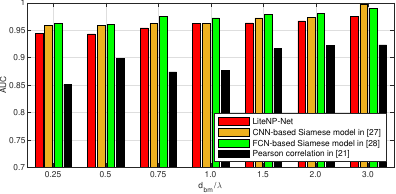}}

\subfloat[Scenario IV.]{\includegraphics[width=3.4in]{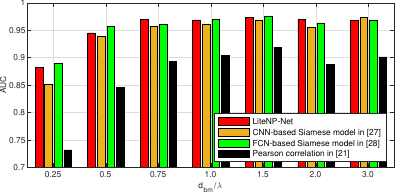}}
\caption{The AUC versus the distance between Bob and Mallory normalized by wavelength $d_{\rm bm}/\lambda$ in various experimental scenarios.}
\label{fig:AUC_scenario}
\end{figure}

\section{Discussion}
\label{sec:discussion}
\subsection{Threshold Selection}
\response{In the NP framework, the threshold $\mathcal{T}$ is determined by enforcing a target false-alarm probability $P_{\rm FA}$. Specifically, the threshold is chosen such that
\begin{equation}
    P_{\rm FA} = \Pr (\Lambda < \mathcal{T} \mid \mathcal{H}_0 ),
\end{equation}
where $\Lambda$ is defined in \eqref{eq:hypothesiscomplex}. Since the distribution of $\Lambda$ under $\mathcal{H}_0$ depends on the channel statistics, computing threshold requires knowledge of the channel distribution of the environment. Consequently, the threshold generally needs to be re-tuned when the environment changes. In experimental settings, where the channel distribution is typically unknown, the threshold $\epsilon_{s}$ is selected empirically from the ROC curve based on the desired trade-off between false-alarm probability and detection performance.}

\subsection{Potential Limitations}
\response{While LiteNP-Net demonstrates strong performance across both simulation and experimental evaluations, there are some potential limitations.

\subsubsection{Channel Model Mismatch}
LiteNP-Net is trained to capture the underlying channel statistics and then applies the learned decision structure during testing. This implicitly assumes that the statistical properties of the channel remain sufficiently consistent between the training and test stages. If the channel distribution during testing differs significantly from that observed during training, for example due to changes in the environmental layout or mobility patterns, the generalization performance of LiteNP-Net may deteriorate.

\subsubsection{Advanced Adversarial Strategies}
This paper assumes that the attacker does not try to replicate the legitimate user’s CSI. More sophisticated adversaries may imitate temporal CSI patterns or probe the environment to approximate the legitimate user’s channel. Countermeasures should be designed to resist these stronger attacks.}

\section{Conclusion}
\label{sec:conclusion}
In this paper, we presented a learning-based PLA scheme driven by hypothesis testing, and carried out thorough simulation and experimental evaluation. Specifically, we integrated a practical conditional statistical model into the hypothesis testing framework, enabling the derivation of a theoretically optimal NP detector. A LiteNP-Net driven by hypothesis testing was designed to ensure its lightweight architecture without compromising authentication performance. A notable aspect of this paper is the approach incorporating both simulation and experimental evaluation. In controlled simulation evaluations, the performance of the theoretically optimal NP detector can be calculated, allowing us to compare LiteNP-Net with the derived NP detector. \response{Furthermore, comprehensive ablation studies were conducted on the latent dimensions of the embedding networks, the architectures of the neural networks, and the choice of training loss. We also examined the impact of transmission intervals, SNR, and attack distances on authentication performance.} To further assess the reliability of the LiteNP-Net, we deployed an experimental testbed using Wi-Fi IoT development kits in typical indoor environments, including both LOS and NLOS conditions. Demonstrated by simulation and experimental results, the LiteNP-Net outperformed the traditional correlation-based method and state-of-the-art Siamese-based methods.

\appendices
\section{The proof of the NP detector in~\eqref{eq:hypothesiscomplex}}
\label{proof:pdf}
\response{
\begin{proof}
To facilitate the derivation of the likelihood function in hypothesis testing, we first present a lemma regarding the posterior distribution of $\bm{H}_{\rm{ba}}^{[k]}$ given its measurement $\widehat{\bm{H}}_{\rm{ba}}^{[k]}$, which is derived based on the properties of conditional distribution of jointly Gaussian variables~\cite[Chapter 2]{bishop2006pattern}.
\begin{lemma}
Given the channel estimation model in~\eqref{eq:lsEstimation_vector}, the posterior distribution of $\bm{H}_{\rm{ba}}^{[k]}$ given its measurement $\widehat{\bm{H}}_{\rm{ba}}^{[k]}$ can be expressed as
\begin{equation}
\label{eq:posteriordistribution}
\bm{H}_{\rm{ba}}^{[k]} \mid \widehat{\bm{H}}_{\rm{ba}}^{[k]} \sim \mathcal{C N}\left(\bm{\mu}_{\bm{H} \mid \widehat{\bm{H}}}, \bm{\Sigma}_{\bm{H} \mid \widehat{\bm{H}}}\right),
\end{equation}
where 
\begin{subequations}
\begin{align}
\bm{\mu}_{\bm{H} \mid \widehat{\bm{H}}}&=\mathbb{E}[\bm{H}_{\rm ba}^{[k]} \mid \bm{\widehat{H}}_{\rm ba}^{[k]}] 
=\bm{\Sigma}_{\bm{H}} \bm{\Sigma}_{\bm{\widehat{H}}}^{-1} \widehat{\bm{H}}_{\rm{ba}}^{[k]}, \\
\bm{\Sigma}_{\bm{H} \mid \widehat{\bm{H}}}&=\operatorname{Cov}(\bm{H}_{\rm{ba}}^{[k]} \mid \widehat{\bm{H}}_{\rm{ba}}^{[k]})
=\bm{\Sigma}_{\bm{H}}-\bm{\Sigma}_{\bm{H}} \bm{\Sigma}_{\bm{\widehat{H}}}^{-1} \bm{\Sigma}_{\bm{H}}.
\end{align}
\end{subequations}
\end{lemma}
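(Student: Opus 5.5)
The plan is to stack $\bm{H}_{\rm ba}^{[k]}$ and its measurement $\widehat{\bm{H}}_{\rm ba}^{[k]}$ into one vector, show that this vector is jointly circularly-symmetric complex Gaussian, and then apply the standard conditioning formula for jointly Gaussian vectors~\cite[Chapter 2]{bishop2006pattern}.

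First, I establish joint Gaussianity. By~\eqref{eq:lsEstimation_vector}, $\widehat{\bm{H}}_{\rm ba}^{[k]}=\bm{H}_{\rm ba}^{[k]}+\widehat{\bm{Z}}$, where $\bm{H}_{\rm ba}^{[k]}\sim\mathcal{CN}(\bm 0,\bm\Sigma_{\bm H})$ and $\widehat{\bm Z}\sim\mathcal{CN}(\bm 0,\sigma^2\bm I)$ are independent. The stacked vector $[\bm{H}_{\rm ba}^{[k]};\widehat{\bm{H}}_{\rm ba}^{[k]}]$ is therefore the linear map $\begin{bmatrix}\bm I&\bm 0\\ \bm I&\bm I\end{bmatrix}$ applied to $[\bm{H}_{\rm ba}^{[k]};\widehat{\bm Z}]$. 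That input vector is jointly circularly-symmetric Gaussian, so the stacked vector is as well.

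Second, I compute the second-order statistics, using independence and zero mean:
\begin{equation*}
\mathbb{E}[\bm H_{\rm ba}^{[k]}(\widehat{\bm H}_{\rm ba}^{[k]})^H]=\bm\Sigma_{\bm H},\qquad \mathbb{E}[\widehat{\bm H}_{\rm ba}^{[k]}(\widehat{\bm H}_{\rm ba}^{[k]})^H]=\bm\Sigma_{\bm H}+\sigma^2\bm I=\bm\Sigma_{\widehat{\bm H}}.
\end{equation*}
Circular symmetry also requires the pseudo-covariances to vanish. Both $\bm H_{\rm ba}^{[k]}$ and $\widehat{\bm Z}$ have zero pseudo-covariance and are independent, so every pseudo-covariance block, including the cross term, is zero. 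The joint law is then fully determined by the Hermitian covariance blocks.

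Third, I apply the complex conditioning formula. For jointly circular Gaussian $\bm x,\bm y$, the conditional mean is $\bm\mu_x+\bm\Sigma_{xy}\bm\Sigma_{yy}^{-1}(\bm y-\bm\mu_y)$ and the conditional covariance is $\bm\Sigma_{xx}-\bm\Sigma_{xy}\bm\Sigma_{yy}^{-1}\bm\Sigma_{yx}$. Substituting $\bm\Sigma_{xy}=\bm\Sigma_{yx}=\bm\Sigma_{\bm H}$ (Hermitian) and $\bm\Sigma_{yy}=\bm\Sigma_{\widehat{\bm H}}$ gives exactly $\bm\Sigma_{\bm H}\bm\Sigma_{\widehat{\bm H}}^{-1}\widehat{\bm H}_{\rm ba}^{[k]}$ and $\bm\Sigma_{\bm H}-\bm\Sigma_{\bm H}\bm\Sigma_{\widehat{\bm H}}^{-1}\bm\Sigma_{\bm H}$. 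The conditional law remains circular complex Gaussian, which gives~\eqref{eq:posteriordistribution}. Invertibility is not an issue, since $\sigma^2>0$ makes $\bm\Sigma_{\widehat{\bm H}}$ positive definite.

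The only delicate step is transferring the real-valued conditioning result of~\cite{bishop2006pattern} to the complex case. If this needs checking, I would pass to the real composite representation used in Appendix~\ref{proof:model}, with blocks $\tfrac12\begin{bmatrix}\Re&-\Im\\ \Im&\Re\end{bmatrix}$. This map preserves products and inverses, so conditioning in $\mathbb{R}^{2M}$ gives matrices of the same block structure, and these map back to the complex formulas above. The mapped-back conditional is again circular, and the stated result follows.
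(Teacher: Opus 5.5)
Your proposal is correct and takes the same approach as the paper, which simply invokes the conditional-distribution formula for jointly Gaussian vectors from~\cite[Chapter 2]{bishop2006pattern}, with cross-covariance $\bm{\Sigma}_{\bm{H}}$ and measurement covariance $\bm{\Sigma}_{\bm{\widehat{H}}}$. Your explicit checks fill in details the paper leaves implicit: joint circularity with vanishing pseudo-covariances, invertibility of $\bm{\Sigma}_{\bm{\widehat{H}}}$ from $\sigma^2>0$, and the transfer of the real-valued result to the complex case via the real composite representation.
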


Under null hypothesis $\mathcal{H}_0$, the conditional distribution of $\bm{\widehat{H}}_{\rm ba}^{[k+1]}$ given $\bm{\widehat{H}}_{\rm ba}^{[k]}$ is 
\begin{equation}
    \bm{\widehat{H}}_{\rm ba}^{[k+1]} \mid \bm{\widehat{H}}_{\rm ba}^{[k]} \sim \mathcal{CN}(\bm{\mu}_{\rm ba}^{[k+1]}, \bm{\Sigma}_{\rm ba}^{[k+1]}),
\end{equation}
where conditional mean
\begin{equation}
\label{eq:conditionalmean_ba}
\begin{aligned}
    \bm{\mu}_{\rm ba}^{[k+1]} =& \mathbb{E}\big[\bm{\widehat{H}}^{[k+1]}=\bm{\widehat{H}}_{\rm ba}^{[k+1]} \mid \bm{\widehat{H}}^{[k]}=\bm{\widehat{H}}_{\rm ba}^{[k]} \big]\\
    =& \mathbb{E} \big[(\alpha \bm{H}_{\rm ba}^{[k]}+\sqrt{1-\alpha^2} \bm{\Omega}_{1}^{[k+1]}+\bm{\widehat{Z}}) \mid \bm{\widehat{H}}_{\rm ba}^{[k]} \big] \\
    =& \alpha \cdot \mathbb{E} \big[\bm{H}_{\rm ba}^{[k]} \mid \bm{\widehat{H}}_{\rm ba}^{[k]} \big] + \sqrt{1-\alpha^2} \cdot \mathbb{E} \big[\bm{\Omega}_{1}^{[k+1]} \big] + \mathbb{E} \big[ \bm{\widehat{Z}} \big]\\
    =& \alpha \cdot \mathbb{E} \big[\bm{H}_{\rm ba}^{[k]} \mid \bm{\widehat{H}}_{\rm ba}^{[k]} \big] \\
    =& \alpha \cdot \bm{\Sigma}_{\bm{H}} \bm{\Sigma}_{\bm{\widehat{H}}}^{-1} \widehat{\bm{H}}_{\rm{ba}}^{[k]},
\end{aligned}
\end{equation}
and conditional covariance
\begin{equation}
\label{eq:covariance_ba}
\begin{aligned}
    \bm{\Sigma}_{\rm ba}^{[k+1]} =& \operatorname{Cov} \big[\bm{\widehat{H}}^{[k+1]}=\bm{\widehat{H}}_{\rm ba}^{[k+1]} \mid \bm{\widehat{H}}^{[k]}=\bm{\widehat{H}}_{\rm ba}^{[k]} \big] \\
    =& \operatorname{Cov} \big[(\alpha \bm{H}_{\rm ba}^{[k]}+\sqrt{1-\alpha^2} \bm{\Omega}_{1}^{[k+1]}+\bm{\widehat{Z}}) \mid \bm{\widehat{H}}_{\rm ba}^{[k]} \big] \\
    =& \alpha^2 \cdot \operatorname{Cov} \big[\bm{H}_{\rm{ba}}^{[k]} \mid \widehat{\bm{H}}_{\rm{ba}}^{[k]} \big]\\
    & +(1-\alpha^2) \cdot \operatorname{Cov}\big[\bm{\Omega}_{1}^{[k+1]} \big]+\operatorname{Cov} \big[\bm{\widehat{Z}} \big] \\
    =& \alpha^2 \cdot [\bm{\Sigma}_{\bm{H}}-\bm{\Sigma}_{\bm{H}}\bm{\Sigma}_{\bm{\widehat{H}}}^{-1} \bm{\Sigma}_{\bm{H}}]+(1-\alpha^2) \cdot \bm{\Sigma}_{\bm{H}} + \sigma^2 \bm{I} \\
    \triangleq & \bm{\Sigma}_{\rm ba}.
\end{aligned}
\end{equation}
Notice that the conditional covariance does not depend on the index $[k+1]$ and, thus, is denoted simply by $\bm{\Sigma}_{\rm ba}$. Therefore, the probability density function of the null hypothesis, $\mathcal{H}_0$, can be written as
\begin{equation}
\begin{aligned}
    & f(\bm{\widehat{H}}^{[k+1]}=\bm{\widehat{H}}_{\rm ba}^{[k+1]}|\bm{\widehat{H}}^{[k]}=\bm{\widehat{H}}_{\rm ba}^{[k]}) \\
    & =\frac{1}{\pi^n \operatorname{det}(\bm{\Sigma}_{\rm ba})} e^{-(\bm{\widehat{H}}^{[k+1]}-\bm{\mu}_{\rm ba}^{[k+1]})^{H} \bm{\Sigma}_{\rm ba}^{-1} (\bm{\widehat{H}}^{[k+1]}-\bm{\mu}_{\rm ba}^{[k+1]})}.
\end{aligned}
\end{equation}

Similarly, under alternative hypothesis $\mathcal{H}_1$, the conditional distribution of $\bm{\widehat{H}}_{\rm ma}^{[k+1]}$ given $\bm{\widehat{H}}_{\rm ba}^{[k]}$ is
\begin{equation}
    \bm{\widehat{H}}_{\rm ma}^{[k+1]} \mid \bm{\widehat{H}}_{\rm ba}^{[k]} \sim \mathcal{CN}(\bm{\mu}_{\rm ma}^{[k+1]}, \bm{\Sigma}_{\rm ma}^{[k+1]}),
\end{equation}
where conditional mean
\begin{equation}
\label{eq:conditionalmean_ma}
\begin{aligned}
    \bm{\mu}_{\rm ma}^{[k+1]}& =\mathbb{E} \big[ \bm{\widehat{H}}^{[k+1]}=\bm{\widehat{H}}_{\rm ma}^{[k+1]} \mid \bm{\widehat{H}}^{[k]}=\bm{\widehat{H}}_{\rm ba}^{[k]}]\\
    & = \frac{\beta}{\sqrt{\Theta}} \cdot \bm{\Sigma}_{\bm{H}}\bm{\Sigma}_{\bm{\widehat{H}}}^{-1} \widehat{\bm{H}}_{\rm{ba}}^{[k]},
\end{aligned}
\end{equation}
and conditional covariance
\begin{equation}
\label{eq:covariance_ma}
\begin{aligned}
    \bm{\Sigma}_{\rm ma}^{[k+1]} =& \operatorname{Cov}[\bm{\widehat{H}}^{[k+1]}=\bm{\widehat{H}}_{\rm ma}^{[k+1]} \mid \bm{\widehat{H}}^{[k]}=\bm{\widehat{H}}_{\rm ba}^{[k]}]\\
    =& \frac{\beta^2}{\Theta} \cdot [\bm{\Sigma}_{\bm{H}}-\bm{\Sigma}_{\bm{H}} \bm{\Sigma}_{\bm{\widehat{H}}}^{-1} \bm{\Sigma}_{\bm{H}}]+\frac{1-\beta^2}{\Theta} \cdot \bm{\Sigma}_{\bm{H}}+\sigma^2 \bm{I} \\
    \triangleq & \bm{\Sigma}_{\rm ma}.
\end{aligned}
\end{equation}
Same as $\bm{\Sigma}_{\rm ba}$, the conditional covariance, $\bm{\Sigma}_{\rm ma}^{[k+1]}$, does not depend on the index $[k+1]$ and, thus, is simplified as $\bm{\Sigma}_{\rm ma}$. The probability density function of the alternative hypothesis, $\mathcal{H}_1$, can be written as
\begin{equation}
\begin{aligned}
    & f(\bm{\widehat{H}}^{[k+1]}=\bm{\widehat{H}}_{\rm ma}^{[k+1]}|\bm{\widehat{H}}^{[k]}=\bm{\widehat{H}}_{\rm ba}^{[k]}) \\
    & =\frac{1}{\pi^n \operatorname{det}(\bm{\Sigma}_{\rm ma})} e^{-(\bm{\widehat{H}}^{[k+1]}-\bm{\mu}_{\rm ma}^{[k+1]})^{H} \bm{\Sigma}_{\rm ma}^{-1} (\bm{\widehat{H}}^{[k+1]}-\bm{\mu}_{\rm ma}^{[k+1]})}.
\end{aligned}
\end{equation}

The NP detection is conducted based on the likelihood ratio, which is given as
\begin{equation}
\label{eq:likelihoodratio}
    \frac{f(\bm{\widehat{H}}^{[k+1]}=\bm{\widehat{H}}_{\rm ba}^{[k+1]}|\bm{\widehat{H}}^{[k]}=\bm{\widehat{H}}_{\rm ba}^{[k]})}{f(\bm{\widehat{H}}^{[k+1]}=\bm{\widehat{H}}_{\rm ma}^{[k+1]}|\bm{\widehat{H}}^{[k]}=\bm{\widehat{H}}_{\rm ba}^{[k]})} \underset{\mathcal{H}_1}{\overset{\mathcal{H}_0}{\gtrless}} \mathcal{T}^{\prime}.
\end{equation}
By applying the logarithm to both sides and introducing the definition $\mathcal{T} \triangleq \ln{(\mathcal{T}^{\prime} \cdot \operatorname{det}(\bm{\Sigma}_{\rm ba})/\operatorname{det}(\bm{\Sigma}_{\rm ma}))}$, \eqref{eq:likelihoodratio} can be derived into
\begin{equation}
\begin{aligned}
& -(\widehat{\bm{H}}^{[k+1]})^{H} \bm{\Sigma}_{\rm ba}^{-1} \widehat{\bm{H}}^{[k+1]}+(\widehat{\bm{H}}^{[k+1]})^{H} \bm{\Sigma}_{\rm ba}^{-1} \bm{\mu}_{\rm ba}^{[k+1]}\\
& +(\bm{\mu}_{\rm ba}^{[k+1]})^{H} \bm{\Sigma}_{\rm ba}^{-1} \widehat{\bm{H}}^{[k+1]}-(\bm{\mu}_{\rm ba}^{[k+1]})^{H} \bm{\Sigma}_{\rm ba}^{-1} \bm{\mu}_{\rm ba}^{[k+1]} \\
& +(\widehat{\bm{H}}^{[k+1]})^{H} \bm{\Sigma}_{\rm ma}^{-1} \widehat{\bm{H}}^{[k+1]}-(\widehat{\bm{H}}^{[k+1]})^{H} \bm{\Sigma}_{\rm ma}^{-1} \bm{\mu}_{\rm ma}^{[k+1]}\\
& -(\bm{\mu}_{\rm ma}^{[k+1]})^{H} \bm{\Sigma}_{\rm ma}^{-1} \widehat{\bm{H}}^{[k+1]}+(\bm{\mu}_{\rm ma}^{[k+1]})^{H} \bm{\Sigma}_{\rm ma}^{-1} \bm{\mu}_{\rm ma}^{[k+1]} \underset{\mathcal{H}_1}{\overset{\mathcal{H}_0}{\gtrless}} \mathcal{T},
\end{aligned}
\end{equation}
which can be further written as
\begin{equation}
\begin{aligned}
& -(\widehat{\bm{H}}^{[k+1]})^{H}(\bm{\Sigma}_{\rm ba}^{-1}-\bm{\Sigma}_{\rm ma}^{-1}) \widehat{\bm{H}}^{[k+1]}\\
& +2 \Re \{((\bm{\mu}_{\rm ba}^{[k+1]})^{H} \bm{\Sigma}_{\rm ba}^{-1}-(\bm{\mu}_{\rm ma}^{[k+1]})^{H} \bm{\Sigma}_{\rm ma}^{-1}) \widehat{\bm{H}}^{[k+1]}\} \\
& -(\bm{\mu}_{\rm ba}^{[k+1]})^{H} \bm{\Sigma}_{\rm ba}^{-1} \bm{\mu}_{\rm ba}^{[k+1]}+(\bm{\mu}_{\rm ma}^{[k+1]})^{H} \bm{\Sigma}_{\rm ma}^{-1} \bm{\mu}_{\rm ma}^{[k+1]}\underset{\mathcal{H}_1}{\overset{\mathcal{H}_0}{\gtrless}} \mathcal{T}.
\end{aligned}
\end{equation}
By substituting~\eqref{eq:conditionalmean_ba} and~\eqref{eq:conditionalmean_ma}, we can derive
\begin{equation}
\begin{aligned}
& (\bm{\widehat{H}}^{[k+1]})^{H} \bm{A} \bm{\widehat{H}}^{[k+1]}+\Re\{(\bm{\widehat{H}}_{\rm ba}^{[k]})^{H}\bm{B}\bm{\widehat{H}}^{[k+1]}\}\\
& +(\bm{\widehat{H}}_{\rm ba}^{[k]})^{H} \bm{C} \bm{\widehat{H}}_{\rm ba}^{[k]} \underset{\mathcal{H}_1}{\overset{\mathcal{H}_0}{\gtrless}}  \mathcal{T},
\end{aligned}
\end{equation}
where $\Re(\cdot)$ represents the real part and 
\begin{subequations}
\begin{align}
    & \bm{A}=-(\bm{\Sigma}_{\rm ba}^{-1}-\bm{\Sigma}_{\rm ma}^{-1}),\\
    & \bm{B}=2(\bm{\Sigma}_{\bm{H}} \bm{\Sigma}_{\bm{\widehat{H}}}^{-1})^{H}(\alpha \bm{\Sigma}_{\rm ba}^{-1}-\frac{\beta}{\sqrt{\Theta}}\bm{\Sigma}_{\rm ma}^{-1}),\\
    & \bm{C}=-\bm{\Sigma}_{\bm{\widehat{H}}}^{-1} \bm{\Sigma}_{\bm{H}} (\alpha^2 \bm{\Sigma}_{\rm ba}^{-1}-\frac{\beta^2}{\Theta} \bm{\Sigma}_{\rm ma}^{-1}) \bm{\Sigma}_{\bm{H}} \bm{\Sigma}_{\bm{\widehat{H}}}^{-1},
\end{align}
\end{subequations}
are related to the channel statistics.
\end{proof}
}

\section{The proof of the equivalence between~\eqref{eq:hypothesiscomplex} and~\eqref{eq:hypothesisrealimag}}
\label{proof:model}
\begin{proof}
Note that both the first term $(\bm{\widehat{H}}^{[k+1]})^{H} \bm{A} \bm{\widehat{H}}^{[k+1]}$ and the third term $(\bm{\widehat{H}}_{\rm ba}^{[k]})^{H} \bm{C} \bm{\widehat{H}}_{\rm ba}^{[k]}$ in the left-hand side of~\eqref{eq:hypothesiscomplex} are Hermitian quadratic forms, where $\bm{A}$ and $\bm{C}$ are Hermitian matrices. Therefore, these terms are real-valued and~\eqref{eq:hypothesiscomplex} can be rewritten as 
\begin{equation}
\label{eq:hypothesisequivalentcomplex}
\begin{aligned}
& \Re\{(\bm{\widehat{H}}^{[k+1]})^{H} \bm{A} \bm{\widehat{H}}^{[k+1]}\}+\Re\{(\bm{\widehat{H}}_{\rm ba}^{[k]})^{H}\bm{B}\bm{\widehat{H}}^{[k+1]}\}\\
& +\Re\{(\bm{\widehat{H}}_{\rm ba}^{[k]})^{H} \bm{C} \bm{\widehat{H}}_{\rm ba}^{[k]}\} \underset{\mathcal{H}_1}{\overset{\mathcal{H}_0}{\gtrless}}  \mathcal{T},
\end{aligned}
\end{equation}
where the three terms on the left-hand side have the same structure. Without loss of generality, $\forall\ \bm{h}_{1}, \bm{h}_{2}\in \mathbb{C}^{M^{\prime}\times 1}$ and $\bm{K}\in \mathbb{C}^{M^{\prime}\times M^{\prime}}$, we have
\begin{equation}
\label{eq:complextorealimag}
\begin{aligned}
&\Re\{\bm{h}_{1}^{H} \bm{K} \bm{h}_{2}\} \\
= &\Re\{(\Re\{ \bm{h}_{1} \}+j\Im\{ \bm{h}_{1} \})^{H} (\Re\{ \bm{K} \}+j\Im\{ \bm{K} \}) \\
& (\Re\{ \bm{h}_{2} \}+j\Im\{ \bm{h}_{2} \}) \} \\
= &\Re\{ \bm{h}_{1} \}^{T}(\Re\{ \bm{K} \}\Re\{ \bm{h}_{2} \}-\Im\{ \bm{K} \}\Im\{ \bm{h}_{2} \}) + \\
&\Im\{ \bm{h}_{1} \}^{T} (\Im\{ \bm{K} \}\Re\{ \bm{h}_{2} \}+\Re\{ \bm{K} \}\Im\{ \bm{h}_{2} \}) \\
= & \begin{bmatrix}
\Re\{ \bm{h}_{1} \}\\
\Im\{ \bm{h}_{1} \}
\end{bmatrix}^{T}
\begin{bmatrix}
\Re\{ \bm{K} \}&-\Im\{ \bm{K} \}\\
\Im\{ \bm{K} \}&\Re\{ \bm{K} \}\\
\end{bmatrix}
\begin{bmatrix}
\Re\{ \bm{h}_{2} \}\\
\Im\{ \bm{h}_{2} \}
\end{bmatrix}.
\end{aligned}
\end{equation}
By applying the result of~\eqref{eq:complextorealimag} to~\eqref{eq:hypothesisequivalentcomplex}, we obtain~\eqref{eq:hypothesisrealimag}.
\end{proof}

\section{The proof of the rank of matrices $\bm{A}^{\prime}$, $\bm{B}^{\prime}$ and $\bm{C}^{\prime}$}
\label{proof:rank}
\begin{proof} 
First, we discuss the rank of matrices $\bm{A}$, $\bm{B}$ and $\bm{C}$. From~\eqref{eq:covariance_ba} and~\eqref{eq:covariance_ma}, it can be derived that both $\bm{\Sigma}_{\rm ba}$ and $\bm{\Sigma}_{\rm ma}$ are linear combination of $\bm{\Sigma}_{\bm{H} \mid \widehat{\bm{H}}}$, $\bm{\Sigma}_{\bm{H}}$ and $\sigma^2 \bm{I}$, with
\begin{equation}
    \bm{\Sigma}_{\rm ba}\succ\bm{0},\quad \bm{\Sigma}_{\rm ma}\succ\bm{0},
\end{equation}
which indicates that $\text{rank}(\bm{\Sigma}_{\rm ba})=\text{rank}(\bm{\Sigma}_{\rm ma})=M^{\prime}$ and $\bm{\Sigma}_{\rm ba}$ and $\bm{\Sigma}_{\rm ma}$ share the same basis vectors. Further, we have $\text{rank}(\bm{\Sigma}_{\rm ba}^{-1})=\text{rank}(\bm{\Sigma}_{\rm ma}^{-1})=M^{\prime}$, with $\bm{\Sigma}_{\rm ba}^{-1}$ and $\bm{\Sigma}_{\rm ma}^{-1}$ sharing the same basis vectors. Considering that $\bm{A}=-(\bm{\Sigma}_{\rm ba}^{-1}-\bm{\Sigma}_{\rm ma}^{-1})$ and $\alpha^2\neq \frac{\beta^2}{\Theta}$ in practice, $\bm{A}$ is also a full rank matrix, which means $\text{rank}(\bm{A})=M^{\prime}$.

For $\bm{B}$ and $\bm{C}$, we define a steering vector
\begin{equation}
\bm{v}_l=
\begin{bmatrix}
    1&e^{-j2\pi l/M}&\cdots&e^{-j2\pi(M-1)l/M}
\end{bmatrix}^{T},
\end{equation}
which is a column vector of the DFT matrix. Then the covariance matrix $\bm{\Sigma}_{\bm{H}}$ can be rewritten as
\begin{equation}
\bm{\Sigma}_{\bm{H}}=\sum_{l=0}^{L-1} \sigma_{\rm{ba}}^2(l) \bm{v}_{l} \bm{v}_{l}^{H},
\end{equation}
where each term $\sigma_{\rm{ba}}^2(l) \bm{v}_{l} \bm{v}_{l}^{H}$ is a rank-one matrix with an eigen vector $\bm{v}_{l}$. Since DFT matrix is an unitary matrix, i.e., 
\begin{equation}
    \bm{v}_{i}^{H}\bm{v}_{j}=0, \forall\ i\neq j,
\end{equation}
it indicates that $\bm{v}_{l} \bm{v}_{l}^{H}, l=0, 1, \cdots, L-1$ are linear independent. Therefore, $\text{rank}(\bm{\Sigma}_{\bm{H}})=L$. Based on the properties of matrix rank, we have $\text{rank}(\bm{B})\leq L$, $\text{rank}(\bm{C})\leq L$.

Next, we prove $\text{rank}(\bm{A}^{\prime})=2\cdot \text{rank}(\bm{A})$. Let $\text{rank}(\bm{A})=r$. We denote the $j$-th column of $\bm{A}$ as
\begin{equation}
\bm{a}_{j}=\bm{x}_{j}+i\bm{y}_{j},
\end{equation}
where $\bm{x}_{j}$ and $\bm{y}_{j}$ are real and imaginary parts of $\bm{a}_{j}$, respectively. The corresponding columns in $\bm{A}^{\prime}$ are represented as
\begin{equation}
    \begin{bmatrix}
    \bm{x}_{j}\\
    \bm{y}_{j}
\end{bmatrix},
\begin{bmatrix}
    -\bm{y}_{j}\\
    \bm{x}_{j}
\end{bmatrix}.
\end{equation}
We assume that $\bm{a}_{1}, \bm{a}_{2}, \cdots, \bm{a}_{r}$ is a set of linearly independent column vectors of $\bm{A}$, then the corresponding $2r$ real vectors can be denoted as 
\begin{equation}
\begin{bmatrix}
\bm{x}_1 \\ \bm{y}_1
\end{bmatrix}, 
\begin{bmatrix}
-\bm{y}_1 \\ \bm{x}_1
\end{bmatrix}, \cdots,
\begin{bmatrix}
\bm{x}_r \\ \bm{y}_r
\end{bmatrix}, 
\begin{bmatrix}
-\bm{y}_r \\ \bm{x}_r
\end{bmatrix}.
\end{equation}
If there exists a linear combination of these real column vectors equal to zero
\begin{equation}
\sum_{j=1}^r \alpha_j
\begin{bmatrix}
\bm{x}_j \\
\bm{y}_j
\end{bmatrix}+\sum_{j=1}^r \beta_j
\begin{bmatrix}
-\bm{y}_j \\
\bm{x}_j 
\end{bmatrix}=\bm{0},
\end{equation}
then by combining the real and imaginary parts, we get
\begin{equation}
\sum_{j=1}^r\left(\alpha_j+i \beta_j\right) \bm{a}_j=\bm{0}.
\end{equation}
Since the complex column vectors $\bm{a}_j$ are linearly independent in $\mathbb{C}^{M^{\prime}}$, this implies that 
\begin{equation}
    \alpha_j+i \beta_j=0,
\end{equation}
which gives 
\begin{equation}
    \alpha_j=0, \ \beta_j=0, \ \forall j.
\end{equation}
Thus, the $2r$ real vectors corresponding to $\bm{a}_{1}, \bm{a}_{2}, \cdots, \bm{a}_{r}$ in $\bm{A}^{\prime}$ are linearly independent in $\mathbb{R}^{2M^{\prime}}$. Therefore, 
\begin{equation}
    \text{rank}(\bm{A}^{\prime})\geq 2\cdot \text{rank}(\bm{A}).
\end{equation}
Suppose $\text{rank}(\bm{A}^{\prime})>2r$, then there exist at least $2r+1$ linearly independent real column vectors in $\bm{A}^{\prime}$. Since each complex column of $\bm{A}$ generates two real columns in $\bm{A}^{\prime}$, the linear independence of $2r+1$ real columns in $\bm{A}^{\prime}$ would imply the existence of $r+1$ linearly independent complex columns in $\bm{A}$, which contradicts the assumption that $\text{rank}(\bm{A})=r$. Therefore, 
\begin{equation}
    \text{rank}(\bm{A}^{\prime})=2\cdot \text{rank}(\bm{A})=2M^{\prime}.
\end{equation}
Similarly,
\begin{subequations}
\begin{align}
    & \text{rank}(\bm{B}^{\prime})=2\cdot \text{rank}(\bm{B})\leq 2L,\\
    & \text{rank}(\bm{C}^{\prime})=2\cdot \text{rank}(\bm{C})\leq 2L.
\end{align}
\end{subequations}
\end{proof}

\bibliographystyle{IEEEtran}
\bibliography{IEEEabrv,cites}

\begin{IEEEbiographynophoto}{Yijia Guo} (Member, IEEE)
received the M.Eng. degree in information and communication engineering from Sun Yat-sen University, Guangzhou, China, in 2021. She is currently pursuing the Ph.D. degree with the School of Computer Science and Informatics, University of Liverpool, Liverpool, UK. Her current research interests include the Internet of Things, deep learning and physical layer authentication.
\end{IEEEbiographynophoto}

\begin{IEEEbiographynophoto}{Junqing Zhang} (Senior Member, IEEE) received B.Eng and M.Eng degrees in Electrical Engineering from Tianjin University, China in 2009 and 2012, respectively, and a Ph.D. degree in Electronics and Electrical Engineering from Queen's University Belfast, UK in 2016. From Feb. 2016 to Jan. 2018, he was a Postdoctoral Research Fellow at Queen's University Belfast. From Feb. 2018 to Oct. 2022, he was a Tenure Track Fellow and then a Lecturer (Assistant Professor) at the University of Liverpool, UK. Since Oct. 2022, he has been a Senior Lecturer (Associate Professor) at the University of Liverpool. His research interests include the Internet of Things, wireless security, physical layer security, key generation, radio frequency fingerprint identification, and wireless sensing. 
Dr. Zhang is a co-recipient of the IEEE WCNC 2025 Best Workshop Paper Award. He is a Senior Area Editor of IEEE Transactions on Information Forensics and Security and an Associate Editor of IEEE Transactions on Mobile Computing.
\end{IEEEbiographynophoto}

\begin{IEEEbiographynophoto}{Y.-W. Peter Hong} (S'01 - M'05 - SM'13) 
received the B.S. degree in electrical engineering from National Taiwan University, Taipei, Taiwan, in 1999, and the Ph.D. degree in electrical engineering from Cornell University, Ithaca, NY, USA, in 2005. He is currently a Distinguished Professor with the Institute of Communications Engineering and the Department of Electrical Engineering at National Tsing Hua University (NTHU), Hsinchu, Taiwan. His research interests include AI/ML for wireless communications, UAV and satellite communications, distributed signal processing for the IoT and sensor networks, and physical layer security.

Dr. Hong received the IEEE ComSoc Asia-Pacific Outstanding Young Researcher Award in 2010, the Y. Z. Hsu Scientific Paper Award in 2011, the National Science Council Wu Ta-You Memorial Award in 2011, the Chinese Institute of Electrical Engineering (CIEE) Outstanding Young Electrical Engineer Award in 2012, and the National Science and Technology Council (NSTC) Outstanding Research Award in 2018 and 2022. He was the Chair of the IEEE ComSoc Taipei Chapter from 2017 to 2018. He was the Co-Chair of the Technical Affairs Committee, the Information Services Committee, and the Chapter Coordination Committee of the IEEE ComSoc Asia-Pacific Board, from 2014 to 2015, from 2016 to 2019, and from 2020 to 2021, respectively. He is now the Vice Director of the IEEE ComSoc Asia-Pacific Board from 2022 to 2027. In the past, he also served as Senior Area Editor and Associate Editor, respectively, for IEEE Transactions on Signal Processing, Associate Editor for IEEE Transactions on Information Forensics and Security and an Editor for IEEE Transactions on Communications. He was also a Distinguished Lecturer of the IEEE Communications Society from 2022 to 2023.
\end{IEEEbiographynophoto}

\begin{IEEEbiographynophoto}{Stefano Tomasin} (Senior Member, IEEE) received the Ph.D. degree from the University of Padova, Italy, in 2003. During his studies, he did internships with IBM Research (Switzerland) and Philips Research (Netherlands). He joined the University of Padova, where he has been Assistant Professor (2005-2015), Associate Professor (2016-2022), and Full Professor (since 2022). He was visiting faculty at Qualcomm, San Diego (CA) in 2004, the Polytechnic University in Brooklyn (NY) in 2007, and the Mathematical and Algorithmic Sciences Laboratory of Huawei in Paris (France) in 2015. His current research interests include physical layer security, security of global navigation satellite systems, signal processing for wireless communications, synchronization, and scheduling of communication resources. He has been a senior member of IEEE since 2011 (member since 1999) and a member of EURASIP since 2011. He is or has been an Editor of the IEEE Transactions on Vehicular Technologies (2011-2016), of the IEEE Transactions on Signal Processing (2017-2020), of the EURASIP Journal of Wireless Communications and Networking (since 2011), and of the IEEE Transactions on Information Forensics and Security (since 2020). He also serves as a Deputy Editor-in-Chief of the IEEE Transactions on Information Forensics and Security since January 2023.
\end{IEEEbiographynophoto}

\end{document}